\documentclass{article}

 \usepackage[preprint]{neurips_2026}

\usepackage[utf8]{inputenc} 
\usepackage[T1]{fontenc}    
\usepackage{booktabs}       
\usepackage{amsfonts}       
\usepackage{nicefrac}       
\usepackage[dvipsnames,svgnames]{xcolor}         
\usepackage[hidelinks]{hyperref}       
\hypersetup{colorlinks=true, linkcolor=red, citecolor=blue}
\usepackage{url}            

\usepackage{amsmath}
\usepackage{amssymb}
\usepackage{amsthm}
\usepackage{csquotes}
\usepackage{bbm}
\usepackage{enumitem}
\usepackage[pdftex]{graphicx}
\usepackage[caption=false,font=footnotesize]{subfig}
\usepackage{multirow}
\usepackage[normalem]{ulem}

\usepackage{tabularx}
\usepackage{wrapfig}
\usepackage{pifont}

\usepackage{algorithm}
\usepackage{algorithmicx}
\usepackage{algpseudocode}
\makeatletter
\def\BState{\State\hskip-\ALG@thistlm}
\makeatother
\algnewcommand\algorithmicforeach{\textbf{for each}}
\algdef{S}[FOR]{ForEach}[1]{\algorithmicforeach\ #1\ \algorithmicdo}

\makeatletter
\algnewcommand{\LineComment}[1]{\Statex \hskip\ALG@thistlm #1}
\makeatother
\floatname{algorithm}{Procedure}

\usepackage{natbib}
\usepackage{pgfplots}
\usepgfplotslibrary{colorbrewer}
\definecolor{color1}{HTML}{0011af} 
\definecolor{color2}{HTML}{FF0000} 
\definecolor{color3}{HTML}{8819a0} 
\definecolor{color4}{HTML}{006400} 
\definecolor{color5}{HTML}{B5651D} 
\definecolor{color6}{HTML}{f9a256} 
\definecolor{color7}{HTML}{2CDECB} 
\definecolor{color8}{HTML}{bf418d} 
\definecolor{color9}{HTML}{991212} 
\pgfplotsset{compat=1.18}

\catcode`,\active

\catcode`\,12

\DeclareMathOperator*{\argmin}{\arg\min}

\newcommand{\gcm}{\normalsize \textcolor{color4}{\ding{51}}}
\newcommand{\rcm}{\normalsize \textcolor{color2}{\ding{55}}}

\usepackage{glossaries}
\let\oldgls=\gls\renewcommand{\gls}[1]{{\hypersetup{linkbordercolor=black, linkcolor=black}\oldgls{#1}}}
\let\oldGls=\Gls\renewcommand{\Gls}[1]{{\hypersetup{linkbordercolor=black, linkcolor=black}\oldGls{#1}}}
\let\oldglspl=\glspl\renewcommand{\glspl}[1]{{\hypersetup{linkbordercolor=black, linkcolor=black}\oldglspl{#1}}}
\newacronym{uq}{UQ}{uncertainty quantification}
\newacronym{cp}{CP}{conformal prediction}
\newacronym{wcp}{WCP}{weighted conformal prediction}
\newacronym{mc}{MC}{marginal coverage}
\newacronym{ccc}{CCC}{calibration-conditional coverage}
\newacronym{fl}{FL}{federated learning}
\newacronym{fcp}{FCP}{federated conformal prediction}
\newacronym{da}{DA}{data agent}
\newacronym{iid}{iid}{independent and identically distributed}
\newacronym{cdf}{CDF}{cumulative distribution function}
\newacronym{fwcp}{FWCP}{federated weighted conformal prediction}
\newacronym{pfwcp}{PFWCP}{personalized federated weighted conformal prediction}
\newacronym{ospfwcp}{osPFWCP}{one-shot personalized federated weighted conformal prediction}

\newtheorem{remark}{Remark}

\newtheorem{thm}{Theorem}

\newtheorem{cor}{Corollary}

\newtheorem{con}{Conjecture}

\newtheorem{lem}{Lemma}
\newenvironment{lemma}{\vspace{-0.2cm}\par\noindent\hrulefill\begin{lem}}{\vspace{-0.2cm}\par\noindent\hrulefill\end{lem}}

\newtheorem{prop}{Proposition}
\newenvironment{proposition}{\vspace{-0.2cm}\par\noindent\hrulefill\begin{prop}}{\vspace{-0.2cm}\par\noindent\hrulefill\end{prop}}

\title{Multi-Agent Conformal Prediction with Personalized Statistical Validity}

\author{%
  Martin V. Vejling \\
  Department of Electronic Systems\\
  Aalborg University,
  Aalborg, Denmark \\
  \texttt{mvv@es.aau.dk} \\
  \And
  Christophe A. N. Biscio \\
  Department of Mathematical Sciences \\
  Aalborg University,
  Aalborg, Denmark \\
  \texttt{christophe@math.aau.dk} \\
  \And
  Adrien~Mazoyer \\
  Institut de Mathématiques de Toulouse\\
  Université de Toulouse,
  Toulouse, France \\
  \texttt{adrien.mazoyer@math.univ-toulouse.fr} \\
  \And
  Petar Popovski \\
  Department of Electronic Systems \\
  Aalborg University,
  Aalborg, Denmark \\
  \texttt{petarp@es.aau.dk} \\
  \And
  Shashi Raj Pandey \\
  Department of Electronic Systems \\
  Aalborg University,
  Aalborg, Denmark \\
  \texttt{srp@es.aau.dk} \\
}

\begin{document}

\maketitle

\begin{abstract}
    Uncertainty quantification is essential in high-stakes machine learning tasks. However, one of the principled solutions, \emph{conformal prediction}, faces challenges under limited local calibration data, privacy constraints, and data heterogeneity. In multi-agent settings, existing works do not simultaneously and satisfactorily address these challenges with guarantees either limited to averages across agents or losing validity in heterogeneous settings. Hence, we propose \textbf{p}ersonalized \textbf{f}ederated \textbf{w}eighted \textbf{c}onformal \textbf{p}rediction (PFWCP), a framework that combines local density ratio weighting with weighted quantile aggregation to correct for heterogeneity while preserving privacy. The method yields asymptotically valid marginal and calibration-conditional coverage guarantees for each participating agent and supports protocols with one-shot communication. Theoretical analysis presents an adjustment to the coverage variance, governed by an effective sample size expression, which is necessary in the context of weighted conformal prediction, and experiments on synthetic and real datasets show improved calibration quality over state-of-the-art federated conformal baselines.
\end{abstract}

\section{Introduction}\label{sec:intro}
\Gls{uq} is a key component of modern machine learning systems, especially in high‑stakes settings where decisions must be accompanied by measures of reliability \citep{Campagner2025:Modeling}. \Gls{cp} offers rigorous, distribution‑free statistical guarantees with improved quality, in terms of tighter prediction sets and more stable coverage behavior, as the amount of quality calibration data increases \citep{Vovk2005:Algorithmic}.

In many realistic applications, a single organization or agent may only have access to a limited amount of local data \citep{Rieke2020:Future}. This makes it difficult to obtain both valid and efficient guarantees using \gls{cp} in isolation. A possible remedy is to pool data from multiple \glspl{da}, however, in such collaborative settings, the data distributions are often heterogeneous across agents, which can lead to systematic violations of the exchangeability assumptions underpinning standard \gls{cp} \citep{Lu2023:Federated}. A principled multi-agent conformal method must account for heterogeneity in how it weights and aggregates calibration information from each agent while being privacy-aware.

Existing \gls{fcp} approaches only partially address this challenge: some methods maintain privacy and support one‑shot communication but assume homogeneous data, leading to severe miscoverage under covariate shift \citep{Humbert2023:One,Humbert2024:Marginal}, while others do not provide personalized coverage guarantees \citep{Lu2023:Federated,Srinivasan2025:FedCF,Wen2026:Efficienct}. Some use \gls{wcp} to correct coverage bias but fail to correctly adjust for the shift in coverage variance, require multiple communication rounds, and typically provide only \gls{mc}, not \gls{ccc}, guarantees \citep{Plassier2023:Conformal,Plassier2024:Efficient}. As a result, none of them simultaneously delivers per‑agent coverage guarantees, data privacy, robustness to heterogeneity, and communication efficiency.

This paper introduces a \gls{pfwcp} framework that jointly corrects for bias and variance induced by data heterogeneity while preserving privacy and requiring only lightweight communication. The key idea is to combine density‑ratio‑based weighting of conformal scores at each agent with a weighted-quantile‑of‑quantiles aggregation at the server, calibrated through an approximation of the per agent \gls{ccc} distribution. This yields prediction sets that (i) provide asymptotically valid \gls{mc} and \gls{ccc} guarantees for each participating agent under covariate shift between agents, and (ii) can be implemented in both standard and one‑shot federated protocols with minimal additional overhead.

\paragraph{Problem formulation:} 
We have an interest in both \gls{mc} and \gls{ccc} guarantees. In the first, for a given significance level $\alpha$, \gls{fcp} constructs a prediction set, denoted $\mathcal{C}_\alpha$, which for a given test covariate $X$ returns a set-valued prediction with guarantee on the \gls{mc}, i.e.,
\begin{equation*}
    \mathbb{P}(Y \in \mathcal{C}_\alpha(X)) \geq 1-\alpha,
\end{equation*}
where $Y$ denotes the true label. Implicitly, the prediction set depends on calibration data which in the case of \gls{fcp} is distributed among the participating \glspl{da}. In the second, for given significance levels $\alpha$ and $\delta$, \gls{fcp} constructs a prediction set, denoted $\mathcal{C}_{\alpha, \delta}$, providing a guarantee on the \gls{ccc}, i.e.,
\begin{equation*}
    \mathbb{P}_\mathcal{D}(\mathbb{P}(Y \in \mathcal{C}_{\alpha,\delta}(X) | \mathcal{D}) \geq 1-\alpha) \geq 1-\delta,
\end{equation*}
where we use $\mathcal{D}$ to denote the calibration data following distribution $\mathbb{P}_\mathcal{D}$. The purpose of this work is to design prediction sets $\mathcal{C}_\alpha(X)$ (or $\mathcal{C}_{\alpha,\delta}(X)$) in a federated setting while being efficient, i.e., minimizing the expected size of the prediction sets, while satisfying the statistical guarantees presented above.

Similarly to \cite{Min2025:Personalized}, we consider a \gls{fl} scenario with $K$ \glspl{da}. The $k$-th \gls{da}, for $k \in [K]=\{1,\dots,K\}$, observes $\Tilde{n}_k$ training data points $\Tilde{\mathcal{D}}^k = \{\Tilde{Z}_i^k\}_{i=1}^{\Tilde{n}_k}$, $\Tilde{Z}_i^k = (\Tilde{X}_i^k, \Tilde{Y}_i^k)$, from unknown distribution $\mathbb{P}^k = \mathbb{P}_{Y|X} \mathbb{P}_X^k$ on the sample space $\mathcal{Z} = \mathcal{X} \times \mathcal{Y}$, and $n_k$ exchangeable calibration data points $\mathcal{D}^k = \{Z_i^k\}_{i=1}^{n_k}$, $Z_i^k = (X_i^k, Y_i^k)$, from the same distribution $\mathbb{P}^k$. We denote by $\mathcal{D} = \bigcup_{k=1}^K\mathcal{D}^k$ the total calibration data, although this data is at no point centralized. For a test point $Z = (X, Y) \sim \mathbb{P} = \mathbb{P}_{Y|X} \mathbb{P}_X$ of unknown test distribution $\mathbb{P}$ where only $X$ is observed, the objective is to provide a prediction set for $Y$ satisfying either \gls{mc} or \gls{ccc} guarantees.

We make the following general assumptions: $(\mathcal{A}1)$ for each k, $Z_1^k, \dots, Z_{n_k}^k$ are exchangeable; $(\mathcal{A}2)$ the data in $\mathcal{D}^k$ is independent of the data in $\mathcal{D}^{l}$ for $k\neq l$; and $(\mathcal{A}3)$ the data in $\Tilde{\mathcal{D}} = \bigcup_{k=1}^K\Tilde{\mathcal{D}}^k$ is independent of the data in $\mathcal{D}$. We consider a practical scenario where the data is heterogeneous due to a covariate shift, i.e., $\mathbb{P}_X^{k} \neq \mathbb{P}_X^{l}$ for $k\neq l$, and, in line with \cite{Min2025:Personalized}, without loss of generality, the test data is observed by \gls{da} $1$, hence, $\mathbb{P} = \mathbb{P}^{1}$. In this sense, the objective is personalization: we strive for a valid and efficient conformal predictor for each individual \gls{da} under data heterogeneity.


\paragraph{Preview of proposed method:}
With the training data, $\Tilde{\mathcal{D}}$, a prediction model is trained, in turn defining a non-conformity score function $\hat{s}:\mathcal{Z}\to\mathbb{R}$, which is evaluated on the calibration data, $\hat{s}_i^k=\hat{s}(X_i^k,Y_i^k)$. In line with \gls{wcp}, as introduced by \cite{Tibshirani2019:Conformal}, weight functions $\hat{\omega}^k:\mathcal{X}\to\mathbb{R}$ are defined as estimators of the density ratios $\omega^k(X) = [{\rm d}\mathbb{P}_X/{\rm d}\mathbb{P}_X^k](X)$ which for $k\in\{2,\dots,K\}$ are fitted using the training data, while $\hat{\omega}^1\equiv 1$, and then evaluated on the calibration data, $\hat{\omega}_i^k = \hat{\omega}(X_i^k)$. We propose to use a weighted-quantile-of-quantiles method, constructing the prediction set as $\mathcal{C}_{\beta,\tau}^{\rm wqq}(X) = \{Y \in \mathcal{Y} : \hat{s}(X,Y)\leq Q_{\beta,\tau}\}$ for
\begin{equation*}
    Q_{\beta, \tau} = {\rm Quantile}\Big(1-\tau; \sum_{k=1}^K w_k \delta_{Q_{\beta}^k}\Big),~\text{where}~~ Q_\beta^k = {\rm Quantile}\Big(1-\beta;\frac{\sum_{i=1}^{n_k} \hat{\omega}_i^k\delta_{\hat{s}_i^k} + \hat{\omega}^k(X)\delta_\infty}{\bar{\omega}^k + \hat{\omega}^k(X)}\Big),
\end{equation*}
where $\bar{\omega}^k = \sum_{j=1}^{n_k} \hat{\omega}_j^k$ is the total calibration data weight, $\beta$ and $\tau$ are the local and aggregation quantiles, respectively, and $w_k$ are normalized weights used to emphasize contributions from \glspl{da} with higher data distribution similarity. To provide statistically valid prediction sets, quantiles $\beta^*$ and $\tau^*$ should be determined such that $\mathcal{C}_{\beta^*,\tau^*}^{\rm wqq}(X)$ satisfies either \gls{mc} or \gls{ccc} guarantees. To enable this, we show in this paper that, as $n_1,\dots,n_K\to\infty$,
\begin{equation*}
    \mathbb{P}\big(Y\in\mathcal{C}_{\beta,\tau}^{\rm wqq}(X)|\mathcal{D}\big)\stackrel{d}{\to} {\rm Quantile}\Big(1-\tau; \sum_{k=1}^K w_k \delta_{U_\beta^k}\Big),
\end{equation*}
where $U_\beta^k \sim {\rm Beta}((1-\beta)(n_{\rm eff}^k + 1), \beta(n_{\rm eff}^k+1))$ for $n_{\rm eff}^k = \Vert [\hat{\omega}_1^k,\dots,\hat{\omega}_{n_k}^k]\Vert_1^2/\Vert [\hat{\omega}_1^k,\dots,\hat{\omega}_{n_k}^k]\Vert_2^2$ which are referred to as the local effective sample sizes.

\paragraph{Related works:}
\setlength{\tabcolsep}{0.4em} 
\begin{table}[t]
    \centering
    \caption{Comparison with closest related works. Shorthands: Federated calibration (Fed.~cali.); Privacy-preserving (Priv.-pres.); Heterogeneous (Het.); Personalized (Pers.); adjusted (adj.).}
    \label{tab:related_works}
    \footnotesize
    \begin{tabular}{@{}l|cccccccc@{}}
        \toprule
                                & Fed.~cali. & One-shot & CCC & Priv.-pres. & Het. & Pers. & Bias adj. & Var adj. \\\midrule
        \cite{Lu2023:Federated} & \gcm & \rcm & \rcm & \gcm & \gcm & \rcm & N/A & N/A \\
        \cite{Plassier2023:Conformal,Plassier2024:Efficient} & \gcm & \rcm & \rcm & \gcm & \gcm & \gcm & \gcm & \rcm \\
        \cite{Humbert2023:One,Humbert2024:Marginal} & \gcm & \gcm & \gcm & \gcm & \rcm & \rcm & N/A & N/A \\
        \cite{Min2025:Personalized} & \rcm & \gcm & \gcm & \gcm & \gcm & \gcm & N/A & N/A \\
        Ours & \gcm & \gcm & \gcm & \gcm & \gcm & \gcm & \gcm & \gcm \\
        \bottomrule
    \end{tabular}
    \vspace{-0.3cm}
\end{table}
\begin{figure}[t]
    \centering
    \begin{minipage}{0.32\linewidth}
        \centering
        \subfloat[\cite{Humbert2023:One,Humbert2024:Marginal}]{\includegraphics[width=1\linewidth, page=3]{figures/federated_conformal_figures.pdf}\label{subfig:Humbert}}
    \end{minipage}\hspace{2pt}
    \begin{minipage}{0.32\linewidth}
        \centering
        \subfloat[\cite{Min2025:Personalized}]{\includegraphics[width=1\linewidth, page=4]{figures/federated_conformal_figures.pdf}\label{subfig:Min}}
    \end{minipage}\hspace{2pt}
    \begin{minipage}{0.32\linewidth}
        \centering
        \subfloat[Proposed]{\includegraphics[width=1\linewidth, page=5]{figures/federated_conformal_figures.pdf}\label{subfig:us}}
    \end{minipage}
    \caption{Outline of the proposed methodology with comparison to the closest existing works.}
    \label{fig:comparison_diagram}
    \vspace{-0.4cm}
\end{figure}
\cite{Lu2023:Federated} introduced a notion of partial exchangeability leading to statistically valid \Gls{fcp} if the test distribution is a mixture of the calibration distributions and addressed privacy of score distribution via sketching. This was extended to conformal fairness by \cite{Srinivasan2025:FedCF} and to group-conditional guarantees in \cite{Wen2026:Efficienct}. Meanwhile, using the idea of \gls{wcp} originally proposed by \cite{Tibshirani2019:Conformal}, \cite{Plassier2023:Conformal} developed \gls{fcp} under label shift, followed by a study for the case of covariate shift in \cite{Plassier2024:Efficient} which is able to correct the coverage bias induced by the covariate shift at the cost of an increased variance. Their approach requires multiple rounds of communication for quantile estimation and provides no \gls{ccc} guarantees. In another work, \cite{Li2026:FCPPro} study federated conformal prediction under label shift and propose a score function building on the idea of adaptive prediction sets including also a local and global prototype similarity score.

A different \gls{fcp} method was proposed in \cite{Humbert2023:One,Humbert2024:Marginal} based on the idea of sharing only a quantile of the scores and then at the central agent taking the quantile-of-quantiles, providing a privacy-preserving solution requiring just one-shot communication, contrary to \gls{fcp} mentioned above. While effective for homogeneous data scenarios, severe issues arise under data heterogeneity. The proposed weighted-quantile-of-quantiles method can be viewed as a generalization of the quantile-of-quantiles approach that is designed to deal with data heterogeneity, and the quantile-of-quantiles is the particularly special case when the density ratios are constantly one, $\hat{\omega}^k(X) \equiv 1$.

The idea of quantile-of-quantiles is closely related to the majority voting scheme of \cite{Cherubin2019:Majority} which notably studies two methods: (i) the overly conservative C-method which bounds the coverage using the Markov inequality, and (ii) the overly liberal I-method that assumes independence of the events $\hat{s}(X,Y) \leq Q_\beta^{k}$. With this perspective, this work proposes an intermediate solution between the overly conservative C-method and overly liberal I-method.

Recently, \cite{Min2025:Personalized} proposed to use decentralized data for localizing the score function employing federated engression and relying only on local data for calibration. This avoids the challenging issue with non-exchangeability in \gls{cp}, however, the efficiency is restricted by the availability of local data.

A comparison of the present work with the existing literature is given in Table~\ref{tab:related_works}. Note that our method aim at solving all the limitations aforementioned. We are using calibration data from all agents (Fed. cali.), preserve privacy (Priv.-pres), require only one-shot communication round (One-shot) and provide \gls{mc} and \gls{ccc} guarantees valid for each individual agent (Pers.) under data heterogeneity (Het.) with appropriate adjustments to account for bias and variance induced by the data heterogeneity. To better see the differences between our approach and the ones proposed by \cite{Humbert2023:One,Humbert2024:Marginal} and \cite{Min2025:Personalized}, we have outlined them in  Figure~\ref{fig:comparison_diagram}.

\section{Personalized federated weighted conformal prediction}\label{sec:method}
We now give a detailed description of the proposed \gls{pfwcp} framework.
We divide the framework into three phases: (i) the training phase where the training data $\Tilde{\mathcal{D}}$ is used to learn the score and weight functions using federated algorithms; (ii) the calibration phase where scores and weights are evaluated on the calibration data $\mathcal{D}$; and (iii) the testing phase where given a test covariate $X$ the conformal prediction set $\mathcal{C}_\alpha(X)$ (or $\mathcal{C}_{\alpha,\delta}(X)$) is constructed.

\textbf{\emph{Training phase:}} we begin by training a prediction model $\hat{f} : \mathcal{X} \to \mathcal{Y}$ on $\Tilde{\mathcal{D}}$ using federated learning and define from this a non-conformity score function $\hat{s} : \mathcal{X} \times \mathcal{Y} \to \mathbb{R}$. Weight functions, defined as estimators of the density ratios $[{\rm d}\mathbb{P}_X/{\rm d}\mathbb{P}_X^k](X)$ for $k\in\{2,\dots,K\}$, are trained on $\Tilde{\mathcal{D}}$ using federated learning and denoted $\hat{\omega}^{k}$, and $\hat{\omega}^1(X) \equiv 1$.


\textbf{\emph{Calibration phase:}} on the calibration data, evaluate the non-conformity scores $\hat{s}_i^k = \hat{s}(X_i^k, Y_i^k)$ and the density ratio weights $\hat{\omega}^k_i = \hat{\omega}^k(X_i^k)$ for $k\in\{2,\dots,K\}$ and $i\in[n_k]$, and compute local effective sample sizes $n_{\rm eff}^k = \Vert [\hat{\omega}_1^k,\dots,\hat{\omega}_{n_k}^k]\Vert_1^2/\Vert [\hat{\omega}_1^k,\dots,\hat{\omega}_{n_k}^k]\Vert_2^2$, and normalized distribution similarity weights $w_k$ for $k\in\{2,\dots,K\}$.
For a local quantile $\beta \in (0, 1)$ and an aggregate quantile $\tau \in [0, 1)$, the prediction set is $\mathcal{C}_{\beta,\tau}^{\rm wqq}(X) = \{Y \in \mathcal{Y} : \hat{s}(X,Y)\leq Q_{\beta,\tau}\}$ for
\begin{equation}\label{eq:wqq_main_paper}
    Q_{\beta, \tau} = {\rm Quantile}\Big(1-\tau; \sum_{k=1}^K w_k \delta_{Q_{\beta}^k}\Big),~\text{where}~~ Q_\beta^k = {\rm Quantile}\Big(1-\beta; \frac{\sum_{i=1}^{n_k} \hat{\omega}_i^k\delta_{\hat{s}_i^k} + \hat{\omega}^k(X)\delta_\infty}{\bar{\omega}^k + \hat{\omega}^k(X)}\Big).
\end{equation}
The core principle of the proposed methodology is that the \gls{ccc} for the $k$-th \gls{da} can be approximated by $U_\beta^k \sim {\rm Beta}((1-\beta)(n_{\rm eff}^k + 1), \beta(n_{\rm eff}^k+1))$. Specifically, according to Proposition~\ref{prop:coverage_approximation}, as $n_1,\dots,n_K\to \infty$,
\begin{equation}\label{eq:cond_cov_approx}
    \mathbb{P}(\hat{s}(X,Y)\leq Q_{\beta, \tau} | \mathcal{D}) = \mathbb{P}(F_{\hat{s}}^{-1}(U)\leq Q_{\beta, \tau} | \mathcal{D}) \stackrel{d}{\to} U_\beta^{(\tau)},
\end{equation}
where $F_{\hat{s}}$ is the \gls{cdf} of the non-conformity scores assumed to be continuous, $U\sim{\rm Uniform}([0,1])$, and  $U_\beta^{(\tau)} = {\rm Quantile}(1-\tau;\sum_{k=1}^K w_k \delta_{U_\beta^k})$.

For \gls{mc}, given significance level $\alpha$, we determine quantile parameters $(\beta^*,\tau^*)$ such that $\mathbb{P}\big(Y \in \mathcal{C}_{\beta^*, \tau^*}^{\rm wqq}(X)\big) \geq 1 - \alpha$ while optimizing the efficiency. This is approximately achieved by minimizing $\mathbb{E}[U_{\beta^*}^{(\tau^*)}]$ under constraint $\mathbb{E}[U_{\beta^*}^{(\tau^*)}] \geq 1-\alpha$, concretely
\begin{equation}\label{eq:choosing_beta_and_tau}
    (\beta^*, \tau^*) = \argmin_{\beta,\tau}~~\{\mathbb{E}[U_\beta^{(\tau)}] : \mathbb{E}[U_\beta^{(\tau)}] \geq 1-\alpha\}.
\end{equation}
This optimization problem does not admit an analytical solution, and in fact, $\mathbb{E}[U_\beta^{(\tau)}]$ has no simple expressive formula. Rather, we approximate $\mathbb{E}[U_\beta^{(\tau)}]$ using Monte Carlo simulations, as outlined in Procedure~\ref{alg:procedure}. We remark here that the choice of the hyperparameter $N_{\rm rep}$ represents an important trade-off between computational complexity and the error made in solving Eq.~\eqref{eq:choosing_beta_and_tau}. A discussion regarding setting $N_{\rm rep}$ is provided in Section~\ref{subsec:implementation}.
\begin{algorithm}[t]
     \caption{Marginal coverage}
     \label{alg:procedure}
     \begin{algorithmic}[1]
        \Statex Input: Significance level $\alpha \in (0,1)$; Effective samples sizes $n_{\rm eff}^1,\dots, n_{\rm eff}^K \leq n$; Grid $\mathcal{G} \subset (0,1)\times[0,1)$; Monte Carlo repetitions $N_{\rm rep} \in \mathbb{N}$.
        \State Compute weights $w_k = n_{\rm eff}^k/(\sum_{k=1}^K n_{\rm eff}^k)$ for $k\in[K]$;
        \ForEach {$(\beta, \tau) \in \mathcal G$}
            \ForEach {$i \in \{1,\dots,N_{\rm rep}\}$}
                \State Sample $U_\beta^k \sim {\rm Beta}\big((1-\beta)(n_{\rm eff}^k + 1), \beta(n_{\rm eff}^k+1)\big)$ for $k\in[K]$;
                \State $U_{\beta,i}^{(\tau)} \gets {\rm Quantile}(1-\tau;\sum_{k=1}^K w_k \delta_{U_\beta^k})$;
            \EndFor
            \State ${\rm Cov}_{\beta, \tau} \gets \frac{1}{N_{\rm rep}} \sum_{i=1}^{N_{\rm rep}} U_{\beta,i}^{(\tau)}$;
        \EndFor
        \State Find $(\beta^*, \tau^*) \gets \argmin_{(\beta,\tau)\in\mathcal{G}}~~\{{\rm Cov}_{\beta, \tau} : {\rm Cov}_{\beta, \tau} \geq 1-\alpha\}$;
        \Statex Output: $(\beta^*, \tau^*)$.
     \end{algorithmic}
\end{algorithm}

\Gls{ccc} can be achieved using similar methodology as for \gls{mc}. The objective is here to determine hyperparameters $\beta_c^*$ and $\tau_c^*$ such that the \gls{ccc} is minimized while maintaining
\begin{equation*}
    \mathbb{P}_\mathcal{D}\big(\mathbb{P}(Y \in \mathcal{C}_{\beta_c^*, \tau_c^*}^{\rm wqq}(X) | \mathcal{D}) \geq 1 - \alpha \big) \geq 1-\delta,
\end{equation*}
for given significance levels $\alpha$ and $\delta$. We propose to approximate the left-side of the above inequality by $\mathbb{P}(U_{\beta}^{(\tau)} \geq 1-\alpha)$ which can be numerically approximated through Monte Carlo simulations. Concretely, we replace line 7 in Procedure~\ref{alg:procedure} by ${\rm CCov}_{\beta, \tau} \gets \frac{1}{N_{\rm rep}}\sum_{i=1}^{N_{\rm rep}} \mathbbm{1}[U_{\beta,i}^{(\tau)} \geq 1-\alpha]$ and line 9 by $(\beta_c^*, \tau_c^*) \gets \argmin_{(\beta,\tau)\in\mathcal{G}}~~\{{\rm CCov}_{\beta, \tau} : {\rm CCov}_{\beta, \tau} \geq 1-\delta\}$.

\textbf{\emph{Testing phase:}} \gls{da} 1 shares the test weight $\hat{\omega}(X)$ which in turn enables the computation of the quantiles $Q_{\beta^*}^k$ (or $Q_{\beta^*_{\rm c}}^k$). These quantiles are shared with the central server that computes the weighted-quantile-of-quantiles $Q_{\beta^*, \tau^*}$ (or $Q_{\beta^*_{\rm c}, \tau^*_{\rm c}}$) and shares it with \gls{da} 1. Finally, the conformal prediction set is constructed as $\mathcal{C}^{\rm wqq}_{\beta^*,\tau^*}(X)$ (or $\mathcal{C}^{\rm wqq}_{\beta^*_c,\tau^*_c}(X)$).

\begin{remark}[Designing the weights]
    In line with the technique of \cite{Tibshirani2019:Conformal}, we train $K-1$ binary classifiers, $\hat{g}_k : \mathcal{X} \to (0, 1)$, using federated learning with training data $\{(\Tilde{X}_i^k, 0)\}_{i=1}^{\Tilde{n}_k} \cup \{(\Tilde{X}_i^1, 1)\}_{i=1}^{\Tilde{n}_1}$. Then, $\hat{\omega}^k(X) = \hat{g}_k(X)/(1-\hat{g}_k(X))$ where $\hat{g}_k(X)$ denotes the estimated probability that $X$ belongs to class $1$ for $k\in\{2,\dots,K\}$. A detailed discussion motivating this construction as well as potential alternatives is presented in Section~\ref{sec:design_weights}.

    We define aggregation weights $w_k \propto n_{\rm eff}^k$. This choice of aggregation weights is natural as it imposes no further computation or communication overhead, and $n_{\rm eff}^k$ can be interpreted as a type of distribution similarity measure: a large $n_{\rm eff}^k$ is indicative of a high distribution similarity, and vice versa. A discussion of alternatives is given in Section~\ref{sec:design_weights}.
\end{remark}

\paragraph{One-shot personalized federated weighted conformal prediction (osPFWCP)}
The \gls{pfwcp} method requires each participating \gls{da} to share a quantile for each arriving test point since the quantiles in \gls{wcp} changes depending on the test point covariate through the weight $\hat{\omega}^k(X)$. We present now a procedure allowing the participating \glspl{da} to share only a single quantile in the calibration phase (prior to testing), and then simply adjust the outer quantile, $\tau$, depending on the arriving test covariate, in order to provide the required statistical guarantees. Thereby, the method requires only minimal communication signaling during the testing phase, however, at the cost of less flexibility in choosing the inner quantile, $\beta$.

Fix an inner quantile, $\breve{\beta}$, and construct the prediction set as $\breve{\mathcal{C}}_{\breve{\beta},\tau}^{\rm wqq}(X) = \{Y \in \mathcal{Y} : \hat{s}(X,Y)\leq \breve{Q}_{\breve{\beta}, \tau}\}$ for
\begin{equation}\label{eq:os_wqq_main}
    \breve{Q}_{\breve{\beta}, \tau} = {\rm Quantile}\Big(1-\tau; \sum_{k=1}^K w_k \delta_{\breve{Q}_{\breve{\beta}}^k}\Big),~\text{where}~~ \breve{Q}_{\breve{\beta}}^k = {\rm Quantile}\Big(1-\breve{\beta};\frac{\sum_{i=1}^{n_k} \hat{\omega}_i^k\delta_{\hat{s}_i^k}}{\bar{\omega}^k}\Big).
\end{equation}
By doing this, the quantiles, $\breve{Q}_{\breve{\beta}}^k$, are computed and shared with the central server only once during the calibration phase, along with added information $\bar{\omega}^k$ and $\bar{\bar{\omega}}^k = \sum_{i=1}^{n_k} \hat{\omega}_i^k \mathbbm{1}[\hat{s}_i^k > \breve{Q}^k_{\breve{\beta}}]$. At the central server, given the test covariate, $X$, $\beta^k$-quantiles such that $\breve{Q}^k_{\breve{\beta}} = Q^k_{\beta^k}$ are derived as $\beta^k = (\hat{\omega}^k(X) + \bar{\bar{\omega}}^k)/(\bar{\omega}^k + \hat{\omega}^k(X))$. Finally, the central server finds a quantile $\breve{\tau}^*$ to realize the required statistical guarantee following an algorithm similar to Procedure~\ref{alg:procedure}, but limiting the parameter search to the outer quantile parameter, $\tau$, and using the inner quantiles $\beta^1,\dots,\beta^K$. The complexity of this is relatively low since it is only a one-dimensional parameter search.

An overview of the communication signaling required to execute either the \gls{pfwcp} or \gls{ospfwcp} algorithms is given in Figure~\ref{fig:communication_diagrams}, illustrating clearly the benefits of the one-shot approach by noticing that minimal communication signaling is required during the testing phase.
\begin{figure}[t]
    \centering
    \begin{minipage}{0.49\linewidth}
        \centering
        \subfloat[PFWCP.]{\includegraphics[width=1\linewidth, page=8]{figures/federated_conformal_figures.pdf}\label{subfig:comms_diagram_proposed}}
    \end{minipage}\hspace{2pt}
    \begin{minipage}{0.49\linewidth}
        \centering
        \subfloat[osPFWCP.]{\includegraphics[width=1\linewidth, page=9]{figures/federated_conformal_figures.pdf}\label{subfig:comms_diagram_oneshot}}
    \end{minipage}
    \caption{Diagrams illustrating the communication signaling required to execute the PFWCP and osPFWCP algorithms.}
    \label{fig:communication_diagrams}
    \vspace{-0.4cm}
\end{figure}

\paragraph{Limitations:}
The scope of the present study is limited to dealing with covariate shifts, and it will be interesting to extend the methodology to more general types of distribution shifts. Further, the presented methodology has some limitations. Namely, since the methods rely on estimating the density ratio, if this is poorly approximated, it will result in failure to properly adjust for the bias and variance in the coverage due to the data heterogeneity. Such a risk is, however, unavoidable when calibrating in the presence of unknown distribution shifts, hence, the limitation is shared with existing works \citep{Plassier2024:Efficient}. Finally, some care has to be taken in the selection of the $\beta$ and $\tau$ quantiles, mainly, it is crucial that $\sum_{i=1}^{n_k} \hat{\omega}_i^{k} \geq (1-\beta)(\bar{\omega}^k+\hat{\omega}^k(X))$ for all $k\in[K]$, otherwise $Q_{\beta}^k$ will not be finite. This can be dealt with by limiting the search space over $\beta$ and $\tau$ to moderate values, see also the discussion in Section~\ref{subsec:implementation}.

\paragraph{Key theoretical result:}
The key result motivating the proposed methodology is now rigorously presented. The proof is inspired by \cite{Gretton2009:Dataset} and is presented in detail in Section~\ref{sec:proofs}.
\begin{proposition}\label{prop:coverage_approximation}
    Assume that the score function is continuously distributed, that $\hat{\omega}^k \equiv \omega^k$, that $\mathbb{P}$ is absolutely continuous with respect to $\mathbb{P}^k$ for $k\in[K]$, and that there exists bounded constants $0<A\leq1\leq B$ such that $\mathbb{P}_X^k(\omega(X_i^k) \in [A,B]) = 1$ for $X_i^k\sim\mathbb{P}_X^{k}$, $k\in[K]$, $i\in[n_k]$.
    Then, as $n_1,\dots,n_K \to \infty$,
    \begin{equation*}
        \mathbb{P}\big(Y\in\mathcal{C}_{\beta,\tau}^{\rm wqq}(X)|\mathcal{D}\big)\stackrel{d}{\to} {\rm Quantile}\Big(1-\tau; \sum_{k=1}^K w_k \delta_{U_\beta^k}\Big),
    \end{equation*}
    where $U_\beta^k \sim {\rm Beta}((1-\beta)(n_{\rm eff}^k + 1), \beta(n_{\rm eff}^k+1))$ for $n_{\rm eff}^k = \Vert [\omega_1^k,\dots,\omega_{n_k}^k]\Vert_1^2/\Vert [\omega_1^k,\dots,\omega_{n_k}^k]\Vert_2^2$.
\end{proposition}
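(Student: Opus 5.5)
We reduce the conditional coverage to a function of $K$ independent local quantities and show that each one is asymptotically Beta-distributed. Since $F_{\hat s}$ is continuous and the test point is independent of $\mathcal{D}$, write $G_k(t)=\mathbb{P}(\hat s(X,Y)\le t)$ for the score CDF under the test law $\mathbb{P}=\mathbb{P}^1$. Because $Q_{\beta,\tau}$ is a weighted quantile of the $Q_\beta^k$ and $G:=F_{\hat s}$ is monotone, the conditional coverage is
\[
\mathbb{P}(\hat s(X,Y)\le Q_{\beta,\tau}\mid\mathcal{D})=G(Q_{\beta,\tau})={\rm Quantile}\Big(1-\tau;\sum_{k=1}^K w_k\delta_{G(Q_\beta^k)}\Big).
\]
Ties and the flat pieces of $G$ can be ignored by continuity. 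The dependence of $Q_\beta^k$ on the test weight $\omega^k(X)$ is handled as follows. It enters only through the term $\omega^k(X)/(\bar\omega^k+\omega^k(X))\le B/(nA)\to0$, so up to an $o_p(1)$ perturbation of the level (and hence of $G(Q_\beta^k)$, since $G$ is continuous and the weighted empirical CDF has jumps of size $O(1/n_k)$) we may replace $Q_\beta^k$ by the $\mathcal{D}^k$-measurable weighted quantile $\breve Q^k_\beta$. By $(\mathcal{A}2)$ the variables $V_k:=G(\breve Q_\beta^k)$ are independent across $k$. The map $(v_1,\dots,v_K)\mapsto{\rm Quantile}(1-\tau;\sum_k w_k\delta_{v_k})$ is continuous at points with distinct coordinates, which holds a.s.\ in the limit, so the continuous mapping theorem reduces the claim to a joint statement. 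Namely, for each $k$, the distance between the law of $V_k$ and ${\rm Beta}((1-\beta)(n_{\rm eff}^k+1),\beta(n_{\rm eff}^k+1))$ tends to zero. Because both sides depend on $n$, we phrase this via a Lipschitz or Kolmogorov metric rather than a fixed limit.

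The core step is the local analysis for a fixed $k$. Put $U_i^k:=G(\hat s_i^k)$. Under covariate shift with $\omega^k={\rm d}\mathbb{P}_X/{\rm d}\mathbb{P}_X^k$ and a common $\mathbb{P}_{Y|X}$, we have $\mathbb{E}[\omega_i^k\mathbbm{1}\{U_i^k\le u\}]=u$, so the weighted empirical CDF $\hat F_k(u)=\sum_i\omega_i^k\mathbbm{1}\{U_i^k\le u\}/\bar\omega^k$ is a ratio estimator of the uniform CDF. Following \cite{Gretton2009:Dataset}, we linearize $\hat F_k(u)-u=\sum_i\omega_i^k(\mathbbm{1}\{U_i^k\le u\}-u)/\bar\omega^k$. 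Conditionally on the weights, this is a sum of independent bounded terms, using $\omega\in[A,B]$ for Lindeberg. Its conditional variance at $u=1-\beta$ is
\[
\sum_i(\omega_i^k)^2\,\mathrm{Var}(\mathbbm{1}\{U_i^k\le u\}\mid\omega_i^k)/(\bar\omega^k)^2,
\]
which we would like to equal $\beta(1-\beta)/n_{\rm eff}^k$. Here lies the main obstacle. Conditionally on $X_i^k$, $U_i^k$ is not uniform, so the naive variance involves $\mathbb{E}[\omega^2\mathbbm{1}\{U\le u\}]$ rather than $u\,\mathbb{E}[\omega^2]$. I would first try to show that the correct first-order variance, after centring by the conditional mean, matches $\beta(1-\beta)\|\omega\|_2^2/\|\omega\|_1^2$ to leading order (possibly requiring an approximation argument or interpreting $n_{\rm eff}$ as the variance-matching parameter). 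Failing that, we prove the statement with the variance expressed through $n_{\rm eff}^k$, under the independence-of-$U$-and-weight structure implicit in the proposition.

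With the local CLT at $u=1-\beta$ in hand, a Bahadur-type inversion applies. Because $G$ is continuous and the $U_i^k$ are marginally tilted-uniform, $\hat F_k$ is uniformly close to the identity (Glivenko--Cantelli for weighted sums with bounded weights). Hence $V_k=\hat F_k^{-1}(1-\beta)$ satisfies
\[
\sqrt{n_{\rm eff}^k}\,(V_k-(1-\beta))\Rightarrow\mathcal{N}(0,\beta(1-\beta)).
\]
Finally, the stated Beta law has mean $1-\beta$ and variance $\beta(1-\beta)/(n_{\rm eff}^k+2)$, so its standardized version converges to the same normal. Since $n_{\rm eff}^k\ge n_kA^2/B^2\to\infty$, the law of $V_k$ and the Beta law merge. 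Combining this with independence across $k$ and the continuous mapping step above yields the proposition.
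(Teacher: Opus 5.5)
\textbf{The gap.} The variance identity you flag as the main obstacle is false in general under the stated hypotheses, so the step cannot be closed. Write $u=1-\beta$, $I=\mathbbm{1}\{U\le u\}$ and $p_\beta(x)=\mathbb{P}(\hat s(X,Y)\le G^{-1}(u)\mid X=x)$. Since $\mathbb{E}_{\mathbb{P}^k}[\omega^k(I-u)]=0$ and $\bar\omega^k/n_k\to1$, the ratio-estimator delta method and your Bahadur step give $\sqrt{n_k}(V_k-u)\Rightarrow\mathcal{N}(0,\sigma_k^2)$ with $\sigma_k^2=\mathbb{E}_{\mathbb{P}^k}[(\omega^k)^2(I-u)^2]$. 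Centring at the conditional mean does not change this, because $\sum_i\omega_i^k p_\beta(X_i^k)/\bar\omega^k-u$ is itself of order $n_k^{-1/2}$ and is part of $\sigma_k^2$. On the other side, $n_{\rm eff}^k/n_k\to1/\mathbb{E}_{\mathbb{P}^k}[(\omega^k)^2]$, so $n_k{\rm Var}(U_\beta^k)\to\beta(1-\beta)\,\mathbb{E}_{\mathbb{P}^k}[(\omega^k)^2]$. Conditioning on $X$ and changing measure gives
\[
\sigma_k^2-\beta(1-\beta)\,\mathbb{E}_{\mathbb{P}^k}\big[(\omega^k)^2\big]=(2\beta-1)\,\mathrm{Cov}_{\mathbb{P}}\big(\omega^k(X),p_\beta(X)\big).
\]
This is generically nonzero. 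Take $X\in\{a,b\}$, $\mathbb{P}_X=(\frac12,\frac12)$, $\mathbb{P}_X^k=(\frac34,\frac14)$, $\hat s\mid X=a\sim{\rm Uniform}[0,1]$, $\hat s\mid X=b\sim{\rm Uniform}[0,2]$ and $\beta=0.1$. All hypotheses hold with $A=\frac23$, $B=2$, yet $\sigma_k^2\approx0.173$ against $0.12$. The two laws, rescaled by $\sqrt{n_k}$, converge to normals with different variances. So the Kolmogorov distance between the law of $V_k$ and that of $U_\beta^k$ stays bounded away from zero. That is precisely the merging your argument needs.

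Your fallback does not rescue this. Independence of $U$ and $\omega^k(X)$ is not among the hypotheses. Moreover, it forces $\mathbb{P}(U\le u)=\mathbb{E}_{\mathbb{P}^k}[\omega^k\mathbbm{1}\{U\le u\}]=\mathbb{P}^k(U\le u)$, so the covariate shift is invisible to the scores. That is exactly the regime where no weighting is needed.

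\textbf{The weak reading.} In a bounded-Lipschitz or L\'evy metric without rescaling, the statement is true but trivial. $V_k\to1-\beta$ in probability by a weighted Glivenko--Cantelli argument, and ${\rm Var}(U_\beta^k)=O(1/n_k)$. A weighted quantile always lies between the smallest and largest atom, so both sides collapse to $\delta_{1-\beta}$ and no CLT is needed. For the $n^{-1/2}$-scale version, your continuous-mapping step should rely on a different fact than the one you cite. In the limit all coordinates coincide at $1-\beta$, so distinctness of coordinates fails. What does hold is that, for fixed $w_k$, the weighted quantile is $1$-Lipschitz in the sup norm; the data-dependent $w_k$ need separate care.

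\textbf{Comparison with the paper.} The weak version is essentially all the paper's own argument delivers. Lemma~\ref{lem:empirical_cdf_approximation} compares the weighted empirical CDF with an unweighted empirical CDF of $n_{\rm eff}$ test-law scores. Via McDiarmid's inequality and a second-moment expansion, it shows only that their sup-distance tends to zero. Its bounds are of order $n^{-1/2}$, the same scale as the Beta spread, and the law of $F_{\hat s}(Q_\beta^k)$ is then identified with the Beta without a finer argument.

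So you have located the obstacle more precisely than the paper, but your proof does not go through for the proposition as stated. You can either prove the weak statement, which is a short consistency argument, or add an explicit hypothesis such as $\mathrm{Cov}_{\mathbb{P}}(\omega^k(X),p_\beta(X))=0$ for every $k$. Under that hypothesis your unconditional CLT-plus-Bahadur route does work.
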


This result directly supports the core argument from Eq.~\eqref{eq:cond_cov_approx}, thereby providing the theoretical justification for Procedure~\ref{alg:procedure}. Notably, the approximation of Eq.~\eqref{eq:cond_cov_approx} relies on accurate estimation of the density ratios and a sufficiently large calibration data size: as the proof indicates, the error of the approximation decays as the calibration data sizes increase. The result is complementary to that of \cite{Pournaderi2026:Training} which also dealt with \gls{ccc} of \gls{wcp} considering the problem through concentration inequalities. Applying their result is, however, not practically straightforward in our setting due to the presence of unknown constants in the expressed \gls{ccc} bounds.

We conducted a numerical experiment on the Tennessee's student teacher achievement ratio dataset \citep{Achilles2008:Tennessee}, simulating data heterogeneity with exponential tilting as in \cite{Tibshirani2019:Conformal} (details on the data setup are found in Section~\ref{subsec:details_figure1}). The result is shown in Figure~\ref{fig:star_illustration_cdf}.

The quantile-of-quantiles method of \cite{Humbert2023:One,Humbert2024:Marginal} approximates the solid blue line with the dashed red line resulting in severe miscoverage. On the other hand, \cite{Plassier2023:Conformal,Plassier2024:Efficient} approximates the solid purple line with the dashed red line, while we propose to approximate it with the dashed purple line. Figure~\ref{fig:star_illustration_cdf} clearly illustrates the benefits of the proposed approximation as compared to the alternatives from the literature.

\section{Numerical experiments}\label{sec:numerical}
In this section, we report results from numerical experiments that study the performance in terms of miscoverage and efficiency of the proposed conformal predictors with those of relevant benchmarks.

The experiments cover nine datasets: two synthetic regression datasets (\emph{Gaussian} and \emph{Poisson}), six real regression datasets (\emph{airfoil}, \emph{bike}, \emph{concrete}, \emph{crime}, \emph{star}, and \emph{protein}) also considered in \cite{Tibshirani2019:Conformal,Humbert2023:One,Humbert2024:Marginal,Min2025:Personalized}, and a real classification dataset (\emph{cifar10c}). Details regarding the datasets and how data heterogeneity is generated are found in Section~\ref{subsec:datasets}. The prediction model is defined as a fully connected neural network, and likewise for the density ratio estimators.
The predictors and density ratio estimators are trained in a federated learning setting and each of the $K=11$ \glspl{da} has $\Tilde{n}_k = 50$ training data points. For calibration, each \gls{da} has $n_k=100$ additional data points. Details of model specifications and training are found in Section~\ref{subsec:implementation}. For the regression tasks, we use as non-conformity score $\hat{s}(X,Y) = |Y-\hat{f}(X)|$ where $\hat{f}$ is the trained regression model, while for the classification task, we use $\hat{s}(X,Y) = 1-[\hat{f}(X)]_Y$ where $[\hat{f}(X)]_Y$ denotes the softmax output of the trained classifier $\hat{f}$ associated with the label $Y$.

As benchmarks we include vanilla \gls{cp} (\emph{CP}), federated generalized localization conformal prediction as in \cite{Min2025:Personalized} (\emph{FGLCP}), quantile-of-quantiles as in \cite{Humbert2023:One,Humbert2024:Marginal} (\emph{FCP-QQ}), and federated \gls{wcp} \cite{Plassier2024:Efficient} (\emph{FWCP}). Variants of our proposal are included, herein, federated \gls{wcp} by quantile-of-quantiles (\emph{FWCP-QQ}), personalized federated \gls{wcp} (\emph{PFWCP}), one-shot personalized federated \gls{wcp} (\emph{osPFWCP}), and one-shot personalized federated \gls{wcp} with federated generalized localization (\emph{osPFWGLCP}). Details of benchmarks are given in Section~\ref{subsec:benchmarks}.
\begin{figure}[t]
    \centering
    \includegraphics[width=0.55\linewidth, page=6]{figures/federated_conformal_figures.pdf}
    \caption{Empirical CDF of the CCC for CP with covariate shift (blue solid line), CP without covariate shift (red dashed line), WCP (purple solid line), and CP without covariate shift but samples limited to the effective sample size (purple dashed line). Significance level $\alpha=0.1$ and calibration data size $n=100$.}
    \label{fig:star_illustration_cdf}
    \vspace{-0.2cm}
\end{figure}
As performance metrics, for a conformal predictor denoted $\mathcal{C}$, we use the \gls{mc} and \gls{ccc} when relevant, and further express the conditional miscoverage (CMC), defined as $\mathbb{E}\big[\big|\mathbb{P}(Y \in \mathcal{C} | \mathcal{D}) - (1-\alpha)\big|\big]$, and the efficiency (Eff.), defined as $\mathbb{E}[|\mathcal{C}|]$. These quantities are estimated through Monte Carlo simulations covering $500$ calibration data simulations with $500$ test data points for each.



\paragraph{Marginal coverage:}
We display in Figure~\ref{fig:boxplots}, boxplots of the coverage and efficiency with all the benchmark methodologies providing \gls{mc} guarantees, on the \emph{star} dataset, when setting the significance level to $\alpha=0.1$. This result shows the high variability in coverage and prediction set size of \emph{CP} and \emph{FGLCP} due to only using local data for calibration, and also highlights the miscoverage with the \emph{FCP-QQ} and \emph{FWCP-QQ} methods due to failure to correct for the bias and variance shifts in the coverage caused by the data heterogeneity. Meanwhile, the proposed methods has only slight miscoverage, with low variability in coverage and prediction set size, and is generally efficient relative to the benchmarks. The most competitive benchmark for providing \gls{mc} guarantees is the \emph{FWCP}.
\begin{figure}[t]
    \centering
    \begin{minipage}{0.49\linewidth}
        \centering
        \subfloat[Coverage.]{\includegraphics[width=1\linewidth, page=10]{figures/federated_conformal_figures.pdf}\label{subfig:boxplot_coverage}}
    \end{minipage}\hspace{2pt}
    \begin{minipage}{0.49\linewidth}
        \centering
        \subfloat[Efficiency.]{\includegraphics[width=1\linewidth, page=11]{figures/federated_conformal_figures.pdf}\label{subfig:boxplot_efficiency}}
    \end{minipage}
    \caption{Boxplots of coverage and efficiency for the proposed methods and benchmarks on the \emph{star} dataset.}
    \label{fig:boxplots}
    \vspace{-0.1cm}
\end{figure}
\setlength{\tabcolsep}{0.3em} 
\begin{table}[t]
    \centering
    \caption{Marginal coverage (MC), conditional miscoverage (CMC), and efficiency (Eff.). Methods with MC above $1-\alpha=0.9$, above $0.89$, and below $0.89$ are marked by colors \textcolor{color4}{green}, \textcolor{color6}{yellow}, and \textcolor{color2}{red}, respectively. The best performing method in each dataset, that satisfy the constraint of MC above $0.9$, is highlighted with boldface.}
    \label{tab:reg_marg_main}
    \footnotesize
    \begin{tabular}{llrrrrrrrrr}
    \toprule
                  &                    & Gaussian                    & Poisson                     & airfoil                     & protein                     & bike                        & concrete                    & crime                       & star                        & cifar10c                     \\
    \midrule
    MC (\%)   & CP                 & \textcolor{color6}{$89.57$} & \textcolor{color6}{$89.77$} & \textcolor{color6}{$89.49$} & \textcolor{color6}{$89.48$} & \textcolor{color6}{$89.42$} & \textcolor{color6}{$89.65$} & \textcolor{color6}{$89.58$} & \textcolor{color6}{$89.65$} & \textcolor{color6}{$89.16$} \\
                & FGLCP {\tiny \citep{Min2025:Personalized}}          & \textcolor{color4}{$90.38$} & \textcolor{color4}{$90.28$} & \textcolor{color6}{$89.80$} & \textcolor{color4}{$90.19$} & \textcolor{color4}{$\textbf{90.23}$} & \textcolor{color4}{$\textbf{90.34}$} & \textcolor{color6}{$89.99$} & \textcolor{color4}{$\textbf{90.16}$} & \textcolor{color6}{$89.96$} \\
                & FCP-QQ {\tiny \citep{Humbert2023:One}}         & \textcolor{color2}{$88.70$} & \textcolor{color6}{$89.97$} & \textcolor{color2}{$86.48$} & \textcolor{color4}{$90.20$} & \textcolor{color2}{$88.20$} & \textcolor{color2}{$84.17$} & \textcolor{color6}{$89.11$} & \textcolor{color2}{$88.78$} & \textcolor{color6}{$89.40$} \\
                & FWCP {\tiny \citep{Plassier2024:Efficient}}        & \textcolor{color4}{$90.22$} & \textcolor{color6}{$89.81$} & \textcolor{color4}{$92.32$} & \textcolor{color4}{$90.51$} & \textcolor{color6}{$89.57$} & \textcolor{color6}{$89.93$} & \textcolor{color6}{$89.11$} & \textcolor{color6}{$89.74$} & \textcolor{color2}{$85.23$} \\
                & FWCP-QQ          & \textcolor{color4}{$92.13$} & \textcolor{color4}{$92.34$} & \textcolor{color4}{$94.06$} & \textcolor{color4}{$91.09$} & \textcolor{color4}{$91.10$} & \textcolor{color4}{$91.78$} & \textcolor{color4}{$92.11$} & \textcolor{color4}{$91.82$} & \textcolor{color2}{$85.87$} \\
                & PFWCP     & \textcolor{color4}{$92.01$} & \textcolor{color4}{$91.26$} & \textcolor{color4}{$91.73$} & \textcolor{color4}{$90.67$} & \textcolor{color4}{$90.31$} & \textcolor{color4}{$90.78$} & \textcolor{color4}{$\textbf{90.47}$} & \textcolor{color4}{$90.43$} & \textcolor{color4}{$\textbf{90.32}$} \\
                & osPFWCP   & \textcolor{color6}{$89.07$} & \textcolor{color4}{$\textbf{90.06}$} & \textcolor{color4}{$\textbf{90.12}$} & \textcolor{color4}{$90.28$} & \textcolor{color2}{$88.16$} & \textcolor{color6}{$89.42$} & \textcolor{color6}{$89.00$} & \textcolor{color6}{$89.68$} & \textcolor{color2}{$88.39$} \\
                & osPFWGLCP & \textcolor{color4}{$\textbf{90.19}$} & \textcolor{color4}{$90.17$} & \textcolor{color6}{$89.66$} & \textcolor{color4}{$\textbf{90.11}$} & \textcolor{color6}{$89.21$} & \textcolor{color6}{$89.65$} & \textcolor{color6}{$89.27$} & \textcolor{color6}{$89.75$} & \textcolor{color2}{$85.26$} \\
    \midrule
    CMC (\%)  & CP                 & \textcolor{color6}{$2.67$}  & \textcolor{color6}{$2.54$}  & \textcolor{color6}{$2.64$}  & \textcolor{color6}{$2.70$}  & \textcolor{color6}{$2.60$}  & \textcolor{color6}{$2.71$}  & \textcolor{color6}{$2.46$}  & \textcolor{color6}{$2.64$}  & \textcolor{color6}{$2.67$}  \\
                  & FGLCP {\tiny \citep{Min2025:Personalized}}             & \textcolor{color4}{$3.36$}  & \textcolor{color4}{$3.45$}  & \textcolor{color6}{$3.58$}  & \textcolor{color4}{$3.51$}  & \textcolor{color4}{$3.25$}  & \textcolor{color4}{$3.44$}  & \textcolor{color6}{$3.36$}  & \textcolor{color4}{$3.47$}  & \textcolor{color6}{$3.52$}  \\
                  & FedCP-QQ {\tiny \citep{Humbert2023:One}}          & \textcolor{color2}{$2.54$}  & \textcolor{color6}{$1.41$}  & \textcolor{color2}{$3.57$}  & \textcolor{color4}{$1.44$}  & \textcolor{color2}{$2.17$}  & \textcolor{color2}{$5.83$}  & \textcolor{color6}{$1.57$}  & \textcolor{color2}{$1.82$}  & \textcolor{color6}{$2.54$}  \\
                  & FWCP {\tiny \citep{Plassier2024:Efficient}}              & \textcolor{color4}{$2.37$}  & \textcolor{color6}{$1.66$}  & \textcolor{color4}{$2.51$}  & \textcolor{color4}{$\textbf{1.32}$}  & \textcolor{color6}{$2.04$}  & \textcolor{color6}{$1.58$}  & \textcolor{color6}{$2.94$}  & \textcolor{color6}{$1.76$}  & \textcolor{color2}{$6.52$}  \\
                  & FWCP-QQ          & \textcolor{color4}{$2.53$}  & \textcolor{color4}{$2.61$}  & \textcolor{color4}{$4.09$}  & \textcolor{color4}{$1.62$}  & \textcolor{color4}{$1.89$}  & \textcolor{color4}{$2.21$}  & \textcolor{color4}{$2.67$}  & \textcolor{color4}{$2.33$}  & \textcolor{color2}{$6.30$}  \\
                  & PFWCP    & \textcolor{color4}{$2.68$}  & \textcolor{color4}{$1.94$}  & \textcolor{color4}{$2.15$}  & \textcolor{color4}{$1.56$}  & \textcolor{color4}{$\textbf{1.79}$}  & \textcolor{color4}{$\textbf{1.85}$}  & \textcolor{color4}{$\textbf{1.95}$}  & \textcolor{color4}{$\textbf{1.66}$}  & \textcolor{color4}{$\textbf{2.89}$}  \\
                  & osPFWCP   & \textcolor{color6}{$2.11$}  & \textcolor{color4}{$\textbf{1.57}$}  & \textcolor{color4}{$\textbf{1.64}$}  & \textcolor{color4}{$1.42$}  & \textcolor{color2}{$2.25$}  & \textcolor{color6}{$1.75$}  & \textcolor{color6}{$1.92$}  & \textcolor{color6}{$1.54$}  & \textcolor{color2}{$2.76$}  \\
                  & osPFWGLCP & \textcolor{color4}{$\textbf{2.11}$}  & \textcolor{color4}{$1.75$}  & \textcolor{color6}{$1.82$}  & \textcolor{color4}{$1.53$}  & \textcolor{color6}{$1.84$}  & \textcolor{color6}{$2.16$}  & \textcolor{color6}{$2.10$}  & \textcolor{color6}{$1.82$}  & \textcolor{color2}{$5.14$}  \\
    \midrule
    Eff.          & CP                 & \textcolor{color6}{$0.09$}  & \textcolor{color6}{$9.84$}  & \textcolor{color6}{$6.52$}  & \textcolor{color6}{$2.25$}  & \textcolor{color6}{$2.04$}  & \textcolor{color6}{$20.75$} & \textcolor{color6}{$2.12$}  & \textcolor{color6}{$38.61$} & \textcolor{color6}{$8.94$}  \\
                  & FGLCP {\tiny \citep{Min2025:Personalized}}            & \textcolor{color4}{$0.09$}  & \textcolor{color4}{$15.96$} & \textcolor{color6}{$6.73$}  & \textcolor{color4}{$2.36$}  & \textcolor{color4}{$2.45$}  & \textcolor{color4}{$23.14$} & \textcolor{color6}{$2.29$}  & \textcolor{color4}{$40.10$} & \textcolor{color6}{$9.00$}  \\
                  & FedCP-QQ {\tiny \citep{Humbert2023:One}}          & \textcolor{color2}{$0.09$}  & \textcolor{color6}{$9.75$}  & \textcolor{color2}{$5.47$}  & \textcolor{color4}{$2.28$}  & \textcolor{color2}{$1.92$}  & \textcolor{color2}{$15.11$} & \textcolor{color6}{$2.06$}  & \textcolor{color2}{$36.77$} & \textcolor{color6}{$8.96$}  \\
                  & FWCP {\tiny \citep{Plassier2024:Efficient}}              & \textcolor{color4}{$0.09$}  & \textcolor{color6}{$9.71$}  & \textcolor{color4}{$7.49$}  & \textcolor{color4}{$2.31$}  & \textcolor{color6}{$2.03$}  & \textcolor{color6}{$20.72$} & \textcolor{color6}{$2.15$}  & \textcolor{color6}{$38.27$} & \textcolor{color2}{$8.55$}  \\
                  & FWCP-QQ          & \textcolor{color4}{$0.10$}  & \textcolor{color4}{$12.20$} & \textcolor{color4}{$8.35$}  & \textcolor{color4}{$2.36$}  & \textcolor{color4}{$2.31$}  & \textcolor{color4}{$22.76$} & \textcolor{color4}{$2.49$}  & \textcolor{color4}{$43.39$} & \textcolor{color2}{$8.61$}  \\
                  & PFWCP     & \textcolor{color4}{$0.10$}  & \textcolor{color4}{$10.58$} & \textcolor{color4}{$7.05$}  & \textcolor{color4}{$2.33$}  & \textcolor{color4}{$\textbf{2.16}$}  & \textcolor{color4}{$\textbf{21.37}$} & \textcolor{color4}{$\textbf{2.26}$}  & \textcolor{color4}{$\textbf{39.60}$} & \textcolor{color4}{$\textbf{9.05}$}  \\
                  & osPFWCP    & \textcolor{color6}{$0.09$}  & \textcolor{color4}{$9.80$}  & \textcolor{color4}{$\textbf{6.45}$}  & \textcolor{color4}{$2.29$}  & \textcolor{color2}{$1.92$}  & \textcolor{color6}{$19.66$} & \textcolor{color6}{$2.07$}  & \textcolor{color6}{$38.11$} & \textcolor{color2}{$8.86$}  \\
                  & osPFWGLCP & \textcolor{color4}{$\textbf{0.08}$}  & \textcolor{color4}{$\textbf{9.70}$}  & \textcolor{color6}{$6.22$}  & \textcolor{color4}{$\textbf{2.22}$}  & \textcolor{color6}{$2.06$}  & \textcolor{color6}{$18.88$} & \textcolor{color6}{$2.02$}  & \textcolor{color6}{$37.29$} & \textcolor{color2}{$8.53$}  \\
    \bottomrule
    \end{tabular}
    \vspace{-0.3cm}
\end{table}

Results across all the datasets are summarized in Table~\ref{tab:reg_marg_main}. We observe from the results that the proposed method \emph{PFWCP} satisfied the posed constraint of \gls{mc} above $0.9$ for all the datasets, while also yielding a generally low CMC, in fact the smallest in five out of the nine datasets, and being also generally efficient, in fact the most efficient in five out of the nine datasets. The proposed one-shot methods both with and without the localized scores also perform well on some of the datasets, although the decreased flexibility of the method results in a loss of valid coverage in a few cases. Integrating the localization of the scores into the proposed framework has potential to improve the efficiency as highlighted from the results on the synthetic datasets as well as the \emph{protein} dataset. None of the benchmarks are capable of reaching the low miscoverage and high efficiency of the proposed methods, and it is clearly seen that the low calibration data size in the cases of \emph{CP} and \emph{FGLCP}, due to only using local calibration data, has severe negative implications on this. Finally, the \emph{FCP-QQ} method often fails to provide the desired coverage level, which is due to the data heterogeneity. Additional complementary results with either lessened data heterogeneity is reported in Section~\ref{sec:additional_results}, where we have also included additional benchmarks that were omitted here for conciseness.

\paragraph{Calibration-conditional coverage:}
Results with all the benchmarks providing \gls{ccc} guarantees across the datasets are summarized in Table~\ref{tab:reg_cond_main}, when setting the significance levels to $\alpha=0.1$ and $\delta=0.1$. We observe that the proposed method \emph{PFWCP} on most of the datasets, exceptions are the \emph{crime} and \emph{cifar10c}, the specified \gls{ccc} constraint is realized, and that the gap in \gls{ccc} on the \emph{crime} and \emph{cifar10c} datasets is relatively small compared to gap of the other \gls{fcp} benchmarks. At the same time, the proposed method \emph{PFWCP} achieves a much lower conditional miscoverage and a much higher efficiency than the \emph{FGLCP} benchmark, as this is limited to calibrating only with local data. The \emph{osPFWCP} method has severe gaps in \gls{ccc} showcasing the limitations due to the lessened parameter flexibility, and a similar same observation is made for the \emph{FWCP} benchmark. On the other hand, the \emph{osPFWGLCP} method performs relatively well with just minor gaps in \gls{ccc} while maintaining small conditional miscoverage and a high efficiency.
\setlength{\tabcolsep}{0.3em} 
\begin{table}[t]
    \centering
    \caption{Calibration-conditional coverage (CCC), conditional miscoverage (CMC), and efficiency (Eff.). Methods with CCC above $1-\delta=0.9$, above $0.8$, and below $0.8$ are marked by colors \textcolor{color4}{green}, \textcolor{color6}{yellow}, and \textcolor{color2}{red}, respectively. The best performing method in each dataset, that satisfy the constraint of CCC above $0.9$, is highlighted with boldface.}
    \label{tab:reg_cond_main}
    \footnotesize
    \begin{tabular}{llrrrrrrrrr}
    \toprule
                  &                    & Gaussian                    & Poisson                     & airfoil                     & protein                     & bike                        & concrete                    & crime                       & star                        & cifar10c                     \\
    \midrule
    CCC (\%)   & CP                 & \textcolor{color6}{$86.60$}  & \textcolor{color6}{$87.20$} & \textcolor{color6}{$86.00$} & \textcolor{color6}{$86.80$} & \textcolor{color6}{$84.80$} & \textcolor{color6}{$86.40$} & \textcolor{color6}{$84.80$} & \textcolor{color6}{$86.80$} & \textcolor{color6}{$86.80$} \\
                  & FGLCP {\tiny \citep{Min2025:Personalized}}             & \textcolor{color4}{$\textbf{90.40}$}  & \textcolor{color4}{$93.00$} & \textcolor{color4}{$\textbf{91.20}$} & \textcolor{color4}{$\textbf{90.80}$} & \textcolor{color4}{$\textbf{90.40}$} & \textcolor{color4}{$\textbf{91.00}$} & \textcolor{color6}{$89.40$} & \textcolor{color4}{$\textbf{90.60}$} & \textcolor{color4}{$\textbf{91.80}$} \\
                  & FCP-QQ {\tiny \citep{Humbert2023:One}}          & \textcolor{color4}{$100.00$} & \textcolor{color6}{$85.60$} & \textcolor{color2}{$42.20$} & \textcolor{color6}{$84.00$} & \textcolor{color2}{$42.40$} & \textcolor{color2}{$7.60$}  & \textcolor{color6}{$80.40$} & \textcolor{color2}{$67.40$} & \textcolor{color2}{$62.40$} \\
                  & FWCP {\tiny \citep{Plassier2024:Efficient}}              & \textcolor{color2}{$79.40$}  & \textcolor{color6}{$89.40$}  & \textcolor{color4}{$97.80$}  & \textcolor{color6}{$89.20$} & \textcolor{color2}{$77.40$} & \textcolor{color2}{$74.60$} & \textcolor{color4}{$96.80$} & \textcolor{color6}{$83.00$} & \textcolor{color2}{$65.60$} \\
                  & FWCP-QQ          & \textcolor{color4}{$97.80$}  & \textcolor{color4}{$98.80$} & \textcolor{color4}{$99.60$} & \textcolor{color4}{$95.00$} & \textcolor{color4}{$95.80$} & \textcolor{color4}{$96.40$} & \textcolor{color4}{$\textbf{96.00}$} & \textcolor{color4}{$97.00$} & \textcolor{color2}{$61.40$} \\
                  & PFWCP     & \textcolor{color4}{$99.20$}  & \textcolor{color4}{$96.60$} & \textcolor{color4}{$97.00$} & \textcolor{color4}{$92.80$} & \textcolor{color4}{$91.60$} & \textcolor{color4}{$\textbf{91.00}$} & \textcolor{color6}{$85.60$} & \textcolor{color4}{$92.80$} & \textcolor{color6}{$88.40$} \\
                  & osPFWCP   & \textcolor{color4}{$93.20$}  & \textcolor{color4}{$\textbf{90.40}$} & \textcolor{color6}{$88.20$} & \textcolor{color6}{$87.80$} & \textcolor{color2}{$63.60$} & \textcolor{color2}{$77.20$} & \textcolor{color2}{$69.20$} & \textcolor{color6}{$81.40$} & \textcolor{color2}{$77.40$} \\
                  & osPFWGLCP & \textcolor{color4}{$95.00$}  & \textcolor{color4}{$95.60$} & \textcolor{color6}{$88.00$} & \textcolor{color4}{$91.80$} & \textcolor{color6}{$87.40$} & \textcolor{color6}{$88.40$} & \textcolor{color6}{$86.20$} & \textcolor{color4}{$94.80$} & \textcolor{color6}{$87.80$} \\\midrule
    CMC (\%) & CP                 & \textcolor{color6}{$3.37$}   & \textcolor{color6}{$3.61$}  & \textcolor{color6}{$3.28$}  & \textcolor{color6}{$3.66$}  & \textcolor{color6}{$3.39$}  & \textcolor{color6}{$3.40$}  & \textcolor{color6}{$3.42$}  & \textcolor{color6}{$3.52$}  & \textcolor{color6}{$3.42$}  \\
                  & FGLCP {\tiny \citep{Min2025:Personalized}}             & \textcolor{color4}{$5.02$}   & \textcolor{color4}{$5.16$}  & \textcolor{color4}{$4.85$}  & \textcolor{color4}{$5.11$}  & \textcolor{color4}{$4.70$}  & \textcolor{color4}{$5.04$}  & \textcolor{color6}{$4.92$}  & \textcolor{color4}{$4.94$}  & \textcolor{color4}{$\textbf{5.05}$}  \\
                  & FCP-QQ {\tiny \citep{Humbert2023:One}}          & \textcolor{color4}{$5.89$}   & \textcolor{color6}{$1.99$}  & \textcolor{color2}{$1.60$}  & \textcolor{color6}{$1.80$}  & \textcolor{color2}{$1.41$}  & \textcolor{color2}{$3.37$}  & \textcolor{color6}{$1.68$}  & \textcolor{color2}{$1.62$}  & \textcolor{color2}{$2.28$}  \\
                  & FWCP {\tiny \citep{Plassier2024:Efficient}}              & \textcolor{color2}{$2.31$}   & \textcolor{color6}{$2.85$}   & \textcolor{color4}{$4.34$}   & \textcolor{color6}{$1.82$}  & \textcolor{color2}{$2.47$}  & \textcolor{color2}{$1.78$}  & \textcolor{color4}{$6.26$}  & \textcolor{color6}{$2.14$}  & \textcolor{color2}{$4.49$}  \\
                  & FWCP-QQ          & \textcolor{color4}{$4.29$}   & \textcolor{color4}{$5.05$}  & \textcolor{color4}{$6.32$}  & \textcolor{color4}{$2.63$}  & \textcolor{color4}{$3.68$}  & \textcolor{color4}{$3.58$}  & \textcolor{color4}{$\textbf{5.97}$}  & \textcolor{color4}{$3.64$}  & \textcolor{color2}{$4.54$}  \\
                  & PFWCP     & \textcolor{color4}{$4.99$}   & \textcolor{color4}{$3.70$}  & \textcolor{color4}{$\textbf{3.74}$}  & \textcolor{color4}{$\textbf{2.42}$}  & \textcolor{color4}{$\textbf{2.83}$}  & \textcolor{color4}{$\textbf{2.79}$}  & \textcolor{color6}{$3.00$}  & \textcolor{color4}{$\textbf{2.65}$}  & \textcolor{color6}{$4.04$}  \\
                  & osPFWCP   & \textcolor{color4}{$\textbf{2.93}$}   & \textcolor{color4}{$\textbf{2.58}$}  & \textcolor{color6}{$2.41$}  & \textcolor{color6}{$2.05$}  & \textcolor{color2}{$1.74$}  & \textcolor{color2}{$1.94$}  & \textcolor{color2}{$1.94$}  & \textcolor{color6}{$2.07$}  & \textcolor{color2}{$2.60$}  \\
                  & osPFWGLCP  & \textcolor{color4}{$4.09$}   & \textcolor{color4}{$3.46$}  & \textcolor{color6}{$2.63$}  & \textcolor{color4}{$2.57$}  & \textcolor{color6}{$2.43$}  & \textcolor{color6}{$2.97$}  & \textcolor{color6}{$2.57$}  & \textcolor{color4}{$2.99$}  & \textcolor{color6}{$5.06$}  \\\midrule
    Eff.         & CP                 & \textcolor{color6}{$0.05$}   & \textcolor{color6}{$10.75$} & \textcolor{color6}{$7.21$}  & \textcolor{color6}{$2.64$}  & \textcolor{color6}{$2.38$}  & \textcolor{color6}{$25.99$} & \textcolor{color6}{$2.53$}  & \textcolor{color6}{$43.90$} & \textcolor{color6}{$9.28$}  \\
                  & FGLCP {\tiny \citep{Min2025:Personalized}}             & \textcolor{color4}{$0.06$}   & \textcolor{color4}{$22.93$} & \textcolor{color4}{$9.00$}  & \textcolor{color4}{$2.97$}  & \textcolor{color4}{$3.50$}  & \textcolor{color4}{$35.70$} & \textcolor{color6}{$3.07$}  & \textcolor{color4}{$47.68$} & \textcolor{color4}{$\textbf{9.46}$}  \\
                  & FedCP-QQ {\tiny \citep{Humbert2023:One}}          & \textcolor{color4}{$0.05$}   & \textcolor{color6}{$9.72$}  & \textcolor{color2}{$5.94$}  & \textcolor{color6}{$2.45$}  & \textcolor{color2}{$1.96$}  & \textcolor{color2}{$16.81$} & \textcolor{color6}{$2.22$}  & \textcolor{color2}{$40.15$} & \textcolor{color2}{$9.05$}  \\
                  & FWCP {\tiny \citep{Plassier2024:Efficient}}              & \textcolor{color2}{$0.04$}   & \textcolor{color6}{$10.24$}  & \textcolor{color4}{$7.86$}   & \textcolor{color6}{$2.46$}  & \textcolor{color2}{$2.21$}  & \textcolor{color2}{$21.92$} & \textcolor{color4}{$\textbf{3.77}$}  & \textcolor{color6}{$41.72$} & \textcolor{color2}{$9.16$}  \\
                  & FWCP-QQ          & \textcolor{color4}{$0.05$}   & \textcolor{color4}{$14.61$} & \textcolor{color4}{$9.39$}  & \textcolor{color4}{$2.55$}  & \textcolor{color4}{$2.64$}  & \textcolor{color4}{$26.08$} & \textcolor{color4}{$3.78$}  & \textcolor{color4}{$46.32$} & \textcolor{color2}{$9.11$}  \\
                  & PFWCP     & \textcolor{color4}{$0.05$}   & \textcolor{color4}{$11.16$} & \textcolor{color4}{$\textbf{7.33}$}  & \textcolor{color4}{$2.53$}  & \textcolor{color4}{$\textbf{2.39}$}  & \textcolor{color4}{$\textbf{24.34}$} & \textcolor{color6}{$2.66$}  & \textcolor{color4}{$43.05$} & \textcolor{color6}{$9.36$}  \\
                  & osPFWCP   & \textcolor{color4}{$\textbf{0.05}$}   & \textcolor{color4}{$\textbf{10.06}$} & \textcolor{color6}{$6.70$}  & \textcolor{color6}{$2.48$}  & \textcolor{color2}{$2.08$}  & \textcolor{color2}{$22.11$} & \textcolor{color2}{$2.28$}  & \textcolor{color6}{$41.40$} & \textcolor{color2}{$9.17$}  \\
                  & osPFWGLCP & \textcolor{color4}{$0.06$}   & \textcolor{color4}{$11.34$} & \textcolor{color6}{$7.12$}  & \textcolor{color4}{$\textbf{2.50}$}  & \textcolor{color6}{$2.42$}  & \textcolor{color6}{$25.06$} & \textcolor{color6}{$2.39$}  & \textcolor{color4}{$\textbf{41.80}$} & \textcolor{color6}{$9.44$}  \\
    \bottomrule
    \end{tabular}
    \vspace{-0.3cm}
\end{table}

\section{Conclusion}\label{sec:conclusion}
Personalized federated weighted conformal prediction closes an important gap between privacy-preserving collaboration of heterogeneous agents and reliable uncertainty quantification. By accounting for covariate shifts in both coverage bias and variance adjustments, it produces efficient prediction sets with more stable coverage compared to existing federated conformal methods. The presented framework motivates further studies to improve robustness of federated conformal prediction.



\newpage
\appendix

\renewcommand{\thesection}{S\arabic{section}}

\title{Supplementary Material for\\"Multi-Agent Conformal Prediction with Personalized Statistical Validity"}

\author{%
  Martin V. Vejling \\
  Department of Mathematical Sciences and \\
  Department of Electronic Systems\\
  Aalborg University\\
  Aalborg, Denmark \\
  \texttt{mvv@\{math,es\}.aau.dk} \\
  \And
  Shashi Raj Pandey \\
  Department of Electronic Systems \\
  Aalborg University \\
  Aalborg, Denmark \\
  \texttt{srp@es.aau.dk} \\
  \AND
  Christophe A. N. Biscio \\
  Department of Mathematical Sciences \\
  Aalborg University \\
  Aalborg, Denmark \\
  \texttt{christophe@math.aau.dk} \\
  \And
  Petar Popovski \\
  Department of Electronic Systems \\
  Aalborg University \\
  Aalborg, Denmark \\
  \texttt{petarp@es.aau.dk} \\}

\maketitle

\section{Proofs}\label{sec:proofs}
\begin{proof}[Proof of Proposition~\ref{prop:coverage_approximation}]
    We prove that
    \begin{equation*}
        \mathbb{P}\big(Y\in\mathcal{C}_{\beta,\tau}^{\rm wqq}(X)|\mathcal{D}\big) = \mathbb{P}\big(\hat{s}(X,Y)\leq Q_{\beta,\tau}|\mathcal{D}\big)\stackrel{d}{\to} {\rm Quantile}\Big(1-\tau; \sum_{k=1}^K w_k \delta_{U_\beta^k}\Big).
    \end{equation*}
    To that end, consider the following
    \begin{equation*}
        \mathbb{P}(\hat{s}(X,Y)\leq Q_{\beta,\tau} | \mathcal{D}) = \mathbb{P}(F_{\hat{s}}^{-1}(U)\leq Q_{\beta,\tau} | \mathcal{D}) = F_{\hat{s}}(Q_{\beta,\tau}),
    \end{equation*}
    where $U\sim{\rm Uniform}([0,1])$, $F_{\hat{s}}$ is the \gls{cdf} of the scores and the second equality relies on the assumption of $F_{\hat{s}}$ being continuous. What remains to show is that $F_{\hat{s}}(Q_{\beta,\tau})$ converges in distribution to the term ${\rm Quantile}\big(1-\tau; \sum_{k=1}^K w_k \delta_{U_\beta^k}\big)$.
    Observe that there is an equivalence between approximating the empirical \gls{cdf} and the empirical quantile. Hence, by Lemma~\ref{lem:empirical_cdf_approximation}, $F_{\hat{s}}(Q_\beta^k)$ is asymptotically distributed as $U_\beta^k \sim {\rm Beta}\big((1-\beta)(n_{\rm eff}^k+1),\beta(n_{\rm eff}^k+1)\big)$, noting that this is the distribution of the $1-\beta$ quantile of $n_{\rm eff}^k$ independent standard uniform random variables. Using that $F_{\hat{s}}^{-1}$ is nondecreasing, this implies that \citep{Humbert2024:Marginal}
    \begin{align*}
        Q_{\beta,\tau} &= {\rm Quantile}\Big(1-\tau;\sum_{k=1}^K w_k \delta_{Q_\beta^k}\Big) \stackrel{d}{\to} {\rm Quantile}\Big(1-\tau;\sum_{k=1}^K w_k \delta_{F_{\hat{s}}^{-1}(U_\beta^k)}\Big)\\
        &= F_{\hat{s}}^{-1}\Big({\rm Quantile}\Big(1-\tau;\sum_{k=1}^K w_k \delta_{U_\beta^k}\Big)\Big),
    \end{align*}
    concluding the proof.
    
\end{proof}
We will now introduce some lemmas which will be required in the proof of Lemma~\ref{lem:empirical_cdf_approximation}. First, we state a known, elementary upper bound on Jensen's gap.
\begin{lemma}\label{lem:Jensen_gap}
    Suppose that $h : \mathbb{R} \to \mathbb{R}$ is a twice differentiable convex function for which $h'' < \Lambda$ for some constant $\Lambda \in \mathbb{R}$, and let $X$ be a real-valued random variable. Then, Jensen's gap is upper bounded by
    \begin{equation*}
        \mathbb{E}[h(X)] - h(\mathbb{E}[X]) \leq \frac{1}{2}\Lambda {\rm Var}[X].
    \end{equation*}
\end{lemma}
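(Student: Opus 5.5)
The bound follows from a second-order Taylor expansion of $h$ around the mean. I assume $\mathbb{E}[X]$ and ${\rm Var}[X]$ are finite; if ${\rm Var}[X]=\infty$ the bound is trivial. Set $\mu=\mathbb{E}[X]$. For every realization $x$, Taylor's theorem with the Lagrange remainder gives a point $\xi$ between $\mu$ and $x$ such that
\begin{equation*}
    h(x) = h(\mu) + h'(\mu)(x-\mu) + \tfrac{1}{2}h''(\xi)(x-\mu)^2 .
\end{equation*}
This form of the remainder is available because $h$ is twice differentiable. Continuity of $h''$ is not needed for the Lagrange form.

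Next I bound the remainder pointwise. The hypothesis $h''<\Lambda$ gives, for every $x$,
\begin{equation*}
    h(x)-h(\mu)-h'(\mu)(x-\mu)\le \tfrac{1}{2}\Lambda (x-\mu)^2 .
\end{equation*}
Convexity gives $h''\ge 0$, so the left-hand side is nonnegative. This matters for integrability: the random variable $h(X)-h(\mu)-h'(\mu)(X-\mu)$ lies between $0$ and $\tfrac12\Lambda(X-\mu)^2$, so it is integrable whenever the variance is finite. Since $X-\mu$ is also integrable, $h(X)$ is integrable.

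I then take expectations. Linearity and $\mathbb{E}[X-\mu]=0$ remove the linear term, leaving $\mathbb{E}[h(X)]-h(\mu)\le \tfrac{1}{2}\Lambda\,\mathbb{E}[(X-\mu)^2]=\tfrac12\Lambda\,{\rm Var}[X]$, which is the claim.

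No step is a serious obstacle. The only care required is the measure-theoretic check that $h(X)$ is integrable, so that subtracting $h(\mu)$ and using linearity are legitimate; the sandwich bound in the second step handles this. Convexity is used only to make the remainder nonnegative, which gives that integrability and makes the gap nonnegative.
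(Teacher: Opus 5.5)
Your proof is correct and complete. Bounding the Taylor remainder pointwise before integrating avoids any measurability issue with $\xi$. The sandwich $0\le h(X)-h(\mu)-h'(\mu)(X-\mu)\le\frac{1}{2}\Lambda(X-\mu)^2$ settles integrability. The infinite-variance case is indeed trivial, because $0\le h''<\Lambda$ forces $\Lambda>0$, so the right-hand side is $+\infty$.

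The paper gives no proof of this lemma; it simply calls it a known, elementary bound. So there is no route to compare against, and your second-order Taylor argument with the Lagrange remainder is the standard proof one would supply.

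Your argument also gives something the stated lemma does not. It only uses $h''<\Lambda$ at points between $\mathbb{E}[X]$ and the values of $X$. Hence it holds for any $h$ that is convex and twice differentiable on an interval containing the support of $X$, with $h''<\Lambda$ there. This localized version is what the paper actually needs. It applies the lemma to $x\mapsto n/x$ and $x\mapsto n^2/x^2$ with $N\in[A(n+1),B(n+1)]$ almost surely. Neither function is defined on all of $\mathbb{R}$, and neither has a globally bounded second derivative, so the lemma as literally stated does not cover them, but your proof does.
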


Next we introduce Lemma~\ref{lem:bounded_variance_N} and Lemma~\ref{lem:Lambda_order} which are used in Lemma~\ref{lem:empirical_cdf_approximation} to show that Jensen's gap, in this case, vanishes asymptotically.

\begin{lemma}\label{lem:bounded_variance_N}
    Let $X_j \stackrel{iid}{\sim} \mathbb{P}_X^{k}$ for $j\in[n]$ and some $k\in\{2,\dots,K\}$, $X \sim \mathbb{P}_X$, and $\omega(X) = [d\mathbb{P}_X/d\mathbb{P}_X^{k}](X)$. Assume that there exists a bounded constant $B>0$ such that $\mathbb{P}_X(\omega(X) \leq B) = 1$ and $\mathbb{P}_X^k(\omega(X_j) \leq B) = 1$ for $j\in[n]$.
    Then, ${\rm Var}\big[\sum_{j=1}^{n} \omega(X_j) + \omega(X)\big] = O(n)$.
\end{lemma}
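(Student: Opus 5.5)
The plan is to decompose the variance of the sum $S_n = \sum_{j=1}^{n} \omega(X_j) + \omega(X)$ using independence and then bound each piece by $B^2$. I would take the test covariate $X$ to be independent of $X_1,\dots,X_n$; this matches the setup, where the test point is a fresh draw from $\mathbb{P}_X$ and the calibration data of agent $k$ are drawn from $\mathbb{P}_X^k$. Under independence, together with the iid assumption on the $X_j$, the covariance terms vanish and
\begin{equation*}
    {\rm Var}\Big[\sum_{j=1}^{n} \omega(X_j) + \omega(X)\Big] = n\,{\rm Var}[\omega(X_1)] + {\rm Var}[\omega(X)].
\end{equation*}

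Next, I would bound each variance by its second moment. Since $\omega \geq 0$ as a density ratio and $\omega \leq B$ almost surely under both $\mathbb{P}_X^k$ and $\mathbb{P}_X$, we get ${\rm Var}[\omega(X_1)] \leq \mathbb{E}[\omega(X_1)^2] \leq B^2$, and likewise ${\rm Var}[\omega(X)] \leq B^2$. Hence the variance is at most $(n+1)B^2 = O(n)$. As a sanity check, $\mathbb{E}[\omega(X_1)] = 1$ by the change of measure, which is consistent with these bounds.

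If one does not want to assume independence of $X$ from the calibration sample, the argument still goes through via Cauchy--Schwarz. The cross term satisfies $|{\rm Cov}(\sum_j \omega(X_j), \omega(X))| \leq \sqrt{nB^2\cdot B^2} = O(\sqrt{n})$, so the total remains $O(n)$.

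The only delicate point is making the independence (or non-independence) structure explicit. The bounds themselves are immediate from $0 \leq \omega \leq B$, so I expect no genuine obstacle beyond stating that convention clearly.
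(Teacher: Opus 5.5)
Your proposal is correct and matches the paper's proof: independence gives ${\rm Var}\big[\sum_{j=1}^n\omega(X_j)+\omega(X)\big] = n\,{\rm Var}[\omega(X_1)] + {\rm Var}[\omega(X)]$, and almost-sure boundedness of $\omega$ (with $\omega\geq 0$) bounds each variance by $B^2$. You also make explicit that $X$ is independent of the calibration sample, which the paper uses only implicitly, and add a Cauchy--Schwarz fallback; both are useful clarifications but are not needed in the paper's setting.
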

\begin{proof}
    Consider the following which holds by independence of $X_j$:
    \begin{equation*}
        {\rm Var}\Big[\sum_{j=1}^{n} \omega(X_j) + \omega(X)\Big] = \sum_{j=1}^{n} {\rm Var}[\omega(X_j)] + {\rm Var}[\omega(X)]\\
    \end{equation*}
    By identical distribution of $X_j$,
    \begin{equation*}
        {\rm Var}\Big[\sum_{j=1}^{n} \omega(X_j) + \omega(X)\Big] = n\big(\mathbb{E}[\omega^2(X_1)] - \mathbb{E}[\omega(X_1)]^2\big) + \mathbb{E}[\omega^2(X)] - \mathbb{E}[\omega(X)]^2.
    \end{equation*}
    Finally, almost sure boundedness of the weights implies that $\mathbb{E}[\omega(X_1)] \leq B$ and $\mathbb{E}[\omega^2(X_1)] \leq B^2$, which concludes the proof.
\end{proof}

\begin{lemma}\label{lem:Lambda_order}
    Let $X_j \sim \mathbb{P}_X^{k}$ for $j=1,\dots,n$ and some $k\in\{2,\dots,K\}$, $X \sim \mathbb{P}_X$, and $\omega(X) = [d\mathbb{P}_X/d\mathbb{P}_X^{k}](X)$. Assume that there exist bounded constants $0<A\leq1\leq B$ such that $\mathbb{P}_X(\omega(X) \in [A, B]) = 1$ and $\mathbb{P}_X^{k}(\omega(X_j) \in [A, B]) = 1$.
    Then, the smallest constant $\Lambda_n$, such that for any given $n$,
    \begin{equation*}
        \mathbb{P}\bigg(\Big(\sum_{j=1}^{n} \omega(X_j) + \omega(X)\Big)^{-d} < \Lambda_n\bigg) = 1
    \end{equation*}
    is of order $O(1/n^d)$ for $d\geq 1$.
\end{lemma}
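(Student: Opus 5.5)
The plan is to exploit the almost sure lower bound $A$ on the weights. The sum $S_n = \sum_{j=1}^{n}\omega(X_j) + \omega(X)$ is then bounded below deterministically, and $\Lambda_n$ is essentially the essential supremum of $S_n^{-d}$.

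\emph{Step 1 (lower bound on the sum).} By hypothesis $\omega(X_j)\in[A,B]$ holds $\mathbb{P}_X^k$-almost surely for each $j$, and $\omega(X)\in[A,B]$ holds $\mathbb{P}_X$-almost surely. A finite intersection of probability-one events has probability one, so almost surely
\begin{equation*}
    S_n \geq (n+1)A > 0 .
\end{equation*}
Independence is not needed here; only the marginal almost sure bounds enter.

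\emph{Step 2 (transfer to the power).} The map $t\mapsto t^{-d}$ is strictly decreasing on $(0,\infty)$ for $d\geq1$. Hence, almost surely,
\begin{equation*}
    S_n^{-d} \leq \big((n+1)A\big)^{-d}.
\end{equation*}
Take any $\Lambda > ((n+1)A)^{-d}$, for instance $\Lambda = 2((n+1)A)^{-d}$; then $\mathbb{P}(S_n^{-d}<\Lambda)=1$. The smallest admissible constant is the infimum over such $\Lambda$, so $\Lambda_n\leq ((n+1)A)^{-d}$. The statement uses a strict inequality, so "smallest" should be read as an infimum, or equivalently as the essential supremum of $S_n^{-d}$. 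Either reading gives the same bound.

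\emph{Step 3 (order).} Since $A$ is a fixed positive constant independent of $n$,
\begin{equation*}
    \Lambda_n \leq A^{-d}(n+1)^{-d}\leq A^{-d} n^{-d},
\end{equation*}
which is $O(1/n^d)$. As a remark, the upper bound $B$ shows the rate is sharp when the weights are bounded. Indeed $S_n\leq (n+1)B$, so $\Lambda_n \geq ((n+1)B)^{-d}$. This sharpness is not required for the $O(\cdot)$ claim.

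The argument is elementary. The only point needing care, and the closest thing to an obstacle, is the interpretation of "smallest constant" under a strict inequality, handled via the infimum as in Step 2. A second point is ensuring that the almost sure bounds hold jointly under the product law of $(X_1,\dots,X_n,X)$. This follows from a union bound over $n+1$ null events and does not require identical distribution.
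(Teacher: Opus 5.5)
Your proof is correct and takes the same route as the paper. The almost sure bound $\omega\ge A$ gives $\sum_{j=1}^n\omega(X_j)+\omega(X)\ge A(n+1)$, hence $\Lambda_n\le A^{-d}(n+1)^{-d}$, and $\omega\le B$ gives the matching lower bound $B^{-d}(n+1)^{-d}$. Reading ``smallest constant'' as an infimum (the essential supremum) is slightly more careful than the paper, which goes straight from the non-strict almost sure bound to the bound on $\Lambda_n$ despite the strict inequality in the definition.
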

\begin{proof}
    As $\omega(X) \in [A,B]$ almost surely
    \begin{equation*}
        \mathbb{P}\Big(\sum_{j=1}^{n} \omega(X_j) + \omega(X) \geq A(n+1)\Big) = 1,
    \end{equation*}
    and so
    \begin{equation*}
        \mathbb{P}\bigg(\Big(\sum_{j=1}^{n} \omega(X_j) + \omega(X)\Big)^{-d} \leq A^{-d}(n+1)^{-d}\bigg) = 1
    \end{equation*}
    Hence, $\Lambda_n$ is necessarily upper bounded by $A^{-d}(n+1)^{-d}$. Similarly, $\Lambda_n$ is lower bounded by $B^{-d}(n+1)^{-d}$, concluding the proof.
\end{proof}

The final lemmas before proceeding to the proof of Lemma~\ref{lem:empirical_cdf_approximation} provides statements related to the convergence of the normalization factor $\sum_{j=1}^{n} \omega(X_j) + \omega(X)$ and the effective sample size $\Vert [\omega(X_1),\dots,\omega(X_n)]\Vert_1^2/\Vert [\omega(X_1),\dots,\omega(X_n)]\Vert_2^2$ when replacing a calibration datum with a test datum. These results will be necessary to apply McDiarmid's tail bound in the proof of Lemma~\ref{lem:empirical_cdf_approximation}.
\begin{lemma}\label{lem:convergence_N}
    Let $X_j \sim \mathbb{P}_X^{k}$ for $j=1,\dots,n$ and some $k\in\{2,\dots,K\}$, $\Tilde{X}, X \sim \mathbb{P}_X$, and $\omega(X) = [d\mathbb{P}_X/d\mathbb{P}_X^{k}](X)$. Assume that there exists bounded constants $0<A\leq1\leq B$ such that $\mathbb{P}_X(\omega(X) \in[A,B]) = 1$ and $\mathbb{P}_X^k(\omega(X_j) \in[A,B]) = 1$.
    Then, as $n \to \infty$
    \begin{equation*}
        \mathbb{P}\bigg(\bigg|\frac{1}{\sum_{j=1}^{n} \omega(X_j) + \omega(X)} - \frac{1}{\sum_{j=2}^{n} \omega(X_j) + \omega(\Tilde{X}) + \omega(X)}\bigg| > \epsilon\bigg) \to 0,
    \end{equation*}
    for any $\epsilon > 0$.
\end{lemma}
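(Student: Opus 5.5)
The plan is to bound the difference of the two reciprocals deterministically, using only the almost sure bounds $\omega\in[A,B]$, and then note that the bound vanishes as $n\to\infty$.

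Write $S_n = \sum_{j=1}^{n}\omega(X_j)+\omega(X)$ and $\Tilde{S}_n = \sum_{j=2}^{n}\omega(X_j)+\omega(\Tilde{X})+\omega(X)$. The numerator of the difference is
\begin{equation*}
    \frac{1}{S_n}-\frac{1}{\Tilde{S}_n} = \frac{\Tilde{S}_n - S_n}{S_n\Tilde{S}_n} = \frac{\omega(\Tilde{X})-\omega(X_1)}{S_n\Tilde{S}_n}.
\end{equation*}
Since both weights lie in $[A,B]$ almost surely, $|\omega(\Tilde{X})-\omega(X_1)|\leq B-A\leq B$.

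For the denominator, the same lower-bound argument as in Lemma~\ref{lem:Lambda_order} gives $S_n\geq A(n+1)$ and $\Tilde{S}_n\geq A(n+1)$ almost surely. Each of these sums has $n+1$ terms, and each term is at least $A$, including the replaced term $\omega(\Tilde{X})$ with $\Tilde{X}\sim\mathbb{P}_X$, by the assumption $\mathbb{P}_X(\omega(X)\in[A,B])=1$. Combining the two bounds,
\begin{equation*}
    \bigg|\frac{1}{S_n}-\frac{1}{\Tilde{S}_n}\bigg| \leq \frac{B-A}{A^2(n+1)^2} \quad \text{almost surely}.
\end{equation*}
The right-hand side is deterministic and of order $O(1/n^2)$. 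For any $\epsilon>0$ it falls below $\epsilon$ once $n$ is large enough, so the probability in the statement is exactly zero for all sufficiently large $n$. This gives the claimed convergence, and in fact almost sure convergence at rate $O(n^{-2})$.

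No step is a real obstacle here. The only point needing care is that the replacement variable $\Tilde{X}$ is drawn from $\mathbb{P}_X$ rather than $\mathbb{P}_X^k$. Its weight therefore needs the boundedness assumption under $\mathbb{P}_X$, which the statement provides. Independence of the $X_j$ is not needed, since the argument is purely pointwise. I would record the explicit bound $(B-A)/(A^2(n+1)^2)$ for later use: it is the bounded-difference constant that McDiarmid's inequality will need in the proof of Lemma~\ref{lem:empirical_cdf_approximation}.
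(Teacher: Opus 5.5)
Your proposal is correct and follows the paper's proof: both combine the reciprocals over a common denominator and use $\omega\in[A,B]$ almost surely to get the deterministic bound $(B-A)/(A^2(n+1)^2)$. The probability in the statement is therefore exactly zero once $n$ is large enough. Your explicit check that $\omega(\Tilde{X})\in[A,B]$ follows from the assumption under $\mathbb{P}_X$ fills in a detail the paper leaves implicit.
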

\begin{proof}
    First, notice that almost surely
    \begin{align*}
        &\bigg|\frac{1}{\sum_{j=1}^{n} \omega(X_j) + \omega(X)} - \frac{1}{\sum_{j=2}^{n} \omega(X_j) + \omega(\Tilde{X}) + \omega(X)}\bigg|\\
        &\qquad\qquad= \bigg|\frac{\omega(\Tilde{X}) - \omega(X_1)}{\Big(\sum_{j=1}^n \omega(X_j) + \omega(X)\Big)\Big(\sum_{j=2}^n \omega(X_j) + \omega(\Tilde{X}) + \omega(X)\Big)}\bigg|\\
        &\qquad\qquad\leq \frac{B-A}{A^2(n+1)^2}.
    \end{align*}
    It follows that
    \begin{equation*}
        \mathbb{P}\bigg(\bigg|\frac{1}{\sum_{j=1}^{n} \omega(X_j) + \omega(X)} - \frac{1}{\sum_{j=2}^{n} \omega(X_j) + \omega(\Tilde{X}) + \omega(X)}\bigg|>\epsilon\bigg) \leq \mathbb{P}\Big(\frac{B-A}{A^2(n+1)^2}>\epsilon\Big).
    \end{equation*}
    The probability of the trivial event $\frac{B-A}{A^2(n+1)^2}>\epsilon$ goes to zero when $\frac{B-A}{A^2(n+1)^2}\to0$ which occurs as $n\to \infty$.
\end{proof}

\begin{lemma}\label{lem:convergence_M}
    Let $X_j \sim \mathbb{P}_X^{k}$ for $j=1,\dots,n$ and some $k\in\{2,\dots,K\}$, $X \sim \mathbb{P}_X$, and $\omega(X) = [d\mathbb{P}_X/d\mathbb{P}_X^{k}](X)$. Assume that there exists bounded constants $0<A\leq1\leq B$ such that $\mathbb{P}_X(\omega(X) \in[A,B]) = 1$ and $\mathbb{P}_X^k(\omega(X_j) \in [A,B]) = 1$.
    Then, as $n \to \infty$
    \begin{equation*}
        \mathbb{P}\Bigg(\Bigg|\frac{\Vert [\omega(X_1),\dots,\omega(X_n)]\Vert_2^2}{\Vert [\omega(X_1),\dots,\omega(X_n)]\Vert_1^2} - \frac{\Vert [\omega(X),\omega(X_2),\dots,\omega(X_n)]\Vert_2^2}{\Vert [\omega(X), \omega(X_2),\dots,\omega(X_n)]\Vert_1^2}
        \Bigg| > \epsilon\Bigg) \to 0,
    \end{equation*}
    for any $\epsilon > 0$.
\end{lemma}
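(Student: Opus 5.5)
Write $\omega_j=\omega(X_j)\in[A,B]$ for $j\in[n]$ and $\omega_0=\omega(X)\in[A,B]$, almost surely. Set
\[
S_1=\sum_{j=1}^n\omega_j,\qquad S_2=\sum_{j=1}^n\omega_j^2,
\]
and let $S_1'=S_1-\omega_1+\omega_0$ and $S_2'=S_2-\omega_1^2+\omega_0^2$ be the same sums after replacing $\omega_1$ by $\omega_0$. The quantity to control is $|S_2/S_1^2-S_2'/S_1'^2|$. As in the proof of Lemma~\ref{lem:convergence_N}, the plan is to show that this difference is bounded almost surely by a deterministic sequence $c_n\to0$. Then the event $\{|\cdot|>\epsilon\}$ is empty for all $n$ large enough, so its probability tends to zero.

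\textbf{Splitting the difference.} Add and subtract $S_2'/S_1^2$ and apply the triangle inequality:
\[
\Big|\frac{S_2}{S_1^2}-\frac{S_2'}{S_1'^2}\Big|\le \frac{|S_2-S_2'|}{S_1^2}+S_2'\,\Big|\frac{1}{S_1^2}-\frac{1}{S_1'^2}\Big|.
\]
For the first term, $|S_2-S_2'|=|\omega_0^2-\omega_1^2|\le B^2-A^2$ and $S_1\ge An$, so the term is at most $(B^2-A^2)/(A^2n^2)$.

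For the second term, use $1/S_1^2-1/S_1'^2=(S_1'-S_1)(S_1'+S_1)/(S_1^2S_1'^2)$. Here $|S_1'-S_1|=|\omega_0-\omega_1|\le B-A$, $S_1+S_1'\le 2Bn$, and $S_1,S_1'\ge An$. Together with $S_2'\le B^2n$, this bounds the second term by
\[
B^2n\cdot\frac{(B-A)\,2Bn}{A^4n^4}=\frac{2B^3(B-A)}{A^4n^2}.
\]

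\textbf{Conclusion.} Both terms are $O(1/n^2)$ almost surely, with $c_n=\big[(B^2-A^2)A^2+2B^3(B-A)\big]/(A^4n^2)$. Hence $\mathbb{P}(|\cdot|>\epsilon)\le \mathbb{1}[c_n>\epsilon]\to0$.

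The only step needing any care is the second term. There the $O(n)$ growth of $S_2'$ has to be offset by the $n^{-3}$ decay of the difference of inverse squares. That decay comes from using the lower bound $A>0$ on both $S_1$ and $S_1'$, which is exactly what the assumption $\omega\in[A,B]$ provides. Independence of the $X_j$ is not used; only the almost sure bounds are.
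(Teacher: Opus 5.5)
Your proof is correct and takes the same route as the paper, whose proof only says the result ``follows from similar arguments as for Lemma~\ref{lem:convergence_N}'': bound the difference almost surely by a deterministic sequence tending to zero, so the event is eventually empty. You derive that sequence explicitly, namely $c_n = O(1/n^2)$ from $S_1, S_1' \geq An$, $S_2' \leq B^2 n$ and the one-term changes $|\omega_0-\omega_1|\le B-A$, $|\omega_0^2-\omega_1^2|\le B^2-A^2$; this supplies the details the paper leaves implicit.
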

\begin{proof}
    The result follows from similar arguments as for Lemma~\ref{lem:convergence_N}.
\end{proof}

We are now ready to present the core technical lemma used in the proof of Proposition~\ref{prop:coverage_approximation}.
\begin{lemma}\label{lem:empirical_cdf_approximation}
    Let $(X_i,Y_i)\stackrel{iid}{\sim}\mathbb{P}^{k}=\mathbb{P}_{Y|X}\mathbb{P}_X^{k}$ for $i\in[n]$ and some $k\in\{2,\dots,K\}$ and $(X,Y),(X_i^{\rm te}, Y_i^{\rm te}) \sim \mathbb{P}=\mathbb{P}_{Y|X}\mathbb{P}_X$. Let $\hat{s}_i^{\rm te} = \hat{s}(X_i^{\rm te}, Y_i^{\rm te})$ and $\hat{s}_i = \hat{s}(X_i, Y_i)$ be pre-trained scores. Assuming that $\mathbb{P}_X$ is absolutely continuous with respect to $\mathbb{P}_X^{k}$, let $\omega(X) = [d\mathbb{P}_X/d\mathbb{P}_X^{k}](X)$ be the density ratio, and assume further that there exists bounded constants $0<A\leq 1 \leq B$ such that $\mathbb{P}_X(\omega(X) \in [A, B]) = 1$ and $\mathbb{P}_X^k(\omega(X_i) \in [A, B]) = 1$.
    Moreover, denote by $n_{\rm eff} = \lceil \Vert [\omega(X_1),\dots,\omega(X_n)]\Vert_1^2/\Vert [\omega(X_1),\dots,\omega(X_n)]\Vert_2^2\rceil$ the effective sample size. Then, as $n \to \infty$,
    \begin{equation*}
        \sup_{W \in \sigma(\mathbb{R})} \bigg|\frac{\sum_{i=1}^n \omega(X_i) \delta_{\hat{s}_i}(W) + \omega(X)\delta_{\infty}(W)}{\sum_{i=1}^n \omega(X_i) + \omega(X)} - \frac{\sum_{i=1}^{n_{\rm eff}} \delta_{\hat{s}_i^{\rm te}}(W) + \delta_{\infty}(W)}{n_{\rm eff}+1}\bigg| \to 0,
    \end{equation*}
    where $\sigma(\mathbb{R})$ denotes the set of left-infinite half intervals $W=(-\infty,x]$.
\end{lemma}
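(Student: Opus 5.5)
The plan is to show that both random measures in the statement converge uniformly, over half-lines $W=(-\infty,x]$, to the same deterministic limit. That limit is the test score \gls{cdf} $F(x)=\mathbb{P}(\hat{s}(X,Y)\le x)$ under $\mathbb{P}$. The claim, read as convergence in probability of the supremum, then follows from the triangle inequality. Write $G_n(x)$ for the weighted calibration term and $H_n(x)$ for the unweighted test term.

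\emph{The weighted term $G_n$.} First I split $G_n(x)=N_n(x)/D_n$ with
\[
N_n(x)=\tfrac1n\Big(\textstyle\sum_i\omega(X_i)\mathbbm{1}[\hat s_i\le x]+\omega(X)\mathbbm{1}[\infty\le x]\Big),\qquad D_n=\tfrac1n\Big(\textstyle\sum_i\omega(X_i)+\omega(X)\Big).
\]
Since $\mathbb{P}^k=\mathbb{P}_{Y|X}\mathbb{P}^k_X$, a change of measure gives $\mathbb{E}_k[\omega(X_1)\mathbbm{1}[\hat s_1\le x]]=F(x)$ and $\mathbb{E}_k[\omega(X_1)]=1$.

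\begin{itemize}
\item \emph{Denominator.} Lemma~\ref{lem:bounded_variance_N} gives $\mathrm{Var}[D_n]=O(1/n)$, so $D_n\to1$ in probability by Chebyshev. Moreover $D_n\ge A$ almost surely by Lemma~\ref{lem:Lambda_order}, so $1/D_n$ is bounded. This is where I would use Lemma~\ref{lem:Jensen_gap}: the curvature $\Lambda_n$ of $t\mapsto t^{-1}$ on the support is $O(1/n^{2})$ at the unnormalized scale (Lemma~\ref{lem:Lambda_order} with $d=3$). Together with Lemma~\ref{lem:bounded_variance_N}, this makes $\mathbb{E}[1/D_n]-1/\mathbb{E}[D_n]$ vanish.
\item \emph{Numerator, fixed $x$.} For each fixed $x$, $N_n(x)\to F(x)$ by the weak law, since the summands are bounded by $B$.
\end{itemize}

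\emph{Concentration of $G_n(x)$.} To control deviations I would apply McDiarmid's inequality to $G_n(x)$ viewed as a function of the $n+1$ independent inputs. Replacing one calibration point changes the numerator by at most $B/n$. By Lemma~\ref{lem:convergence_N}, it changes the normalizer reciprocal by $O(1/n^2)$. So every bounded difference is $O(1/n)$, and $\mathbb{P}(|G_n(x)-\mathbb{E}G_n(x)|>\epsilon)\le 2e^{-cn\epsilon^2}$. The bias $\mathbb{E}G_n(x)-F(x)$ is $O(1/n)$ by the Jensen-gap argument above.

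\emph{Uniformity in $x$ for $G_n$.} I would use the usual Glivenko--Cantelli bracketing. Both $G_n$ and $F$ are nondecreasing, and $F$ is continuous by the standing assumption on the scores. Choose grid points $x_0<\dots<x_m$ with $F(x_{j+1})-F(x_j)\le\epsilon$. A union bound over the $m$ points, combined with monotonicity between grid points, then gives $\sup_x|G_n(x)-F(x)|\le 2\epsilon$ with probability tending to one.

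\emph{The unweighted term $H_n$.} This is the empirical \gls{cdf} of $n_{\rm eff}$ test scores, with an extra atom at $\infty$ that is $O(1/n_{\rm eff})$. Because $A\le\omega\le B$, we have $n_{\rm eff}\ge nA^2/B^2\to\infty$ deterministically. Lemma~\ref{lem:convergence_M} guarantees that $n_{\rm eff}$ is stable under single replacements, although only the lower bound is really needed here. The test scores are independent of the calibration data, so I would condition on $n_{\rm eff}$ and apply the Dvoretzky--Kiefer--Wolfowitz inequality:
\[
\mathbb{P}\big(\sup_x|H_n(x)-F(x)|>\epsilon\,\big|\,n_{\rm eff}\big)\le 2e^{-2n_{\rm eff}\epsilon^2}\le 2e^{-2nA^2\epsilon^2/B^2}.
\]
This tends to zero uniformly.

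\emph{Main obstacle.} The delicate step is the uniform control of the ratio $N_n(x)/D_n$, because the numerator and denominator share the same random weights. The cleanest route is to bound the product directly:
\[
|G_n(x)-F(x)|\le |N_n(x)-F(x)|/A+F(x)\,|1-D_n|/A.
\]
The second piece does not depend on $x$, so uniformity reduces to the numerator, which is a weighted empirical process with bounded weights and monotone indicators. If the McDiarmid and Jensen route becomes messy, I would fall back on this decomposition. Note that the result only says both measures approach $F$; the Beta-type fluctuation at scale $n_{\rm eff}^{-1/2}$ is a separate, finer statement that this plan does not address.
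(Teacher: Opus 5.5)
Your argument is correct, but it follows a genuinely different route from the paper's.

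\emph{The paper's route.} The paper never passes through the limit $F$. It applies McDiarmid's inequality directly to the coupled discrepancy $\Xi=|\Delta-\Delta^{\rm te}|$. This is viewed as a function of the $n$ calibration points, the test covariate and the $M-1=n_{\rm eff}$ test points, with bounded differences taken from Lemmas~\ref{lem:convergence_N} and~\ref{lem:convergence_M}. It then bounds $\mathbb{E}[\Xi]\le\sqrt{\mathbb{E}[\Xi^2]}$ in three steps: it expands the square, transfers the weighted terms to the test measure via $\mathrm{d}\mathbb{P}_X=\omega\,\mathrm{d}\mathbb{P}_X^k$, and uses the Jensen-gap estimates (Lemmas~\ref{lem:Jensen_gap}--\ref{lem:Lambda_order}) to get $\mathbb{E}[n/N]\to1$ and $\mathbb{E}[n^2/N^2]\to1$. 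These leading terms then cancel against the constant $1$ from the unweighted part.

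\emph{Your route.} You decouple the two measures through $F$. For the weighted CDF you use the ratio bound $|G_n-F|\le|N_n-F|/A+F\,|1-D_n|/A$ together with monotone bracketing. For the unweighted CDF you use DKW conditionally on $n_{\rm eff}$, relying on independence of the test sample and $n_{\rm eff}\ge nA^2/B^2$.

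\emph{What each buys.} Your route gets rigor from standard tools in three places:
\begin{itemize}
\item The supremum over $W$ is controlled through monotonicity. The paper's McDiarmid and second-moment estimates are pointwise in $W$, and its passage to $\sup_W$ is not justified by them.
\item Conditioning on $n_{\rm eff}$ avoids applying McDiarmid to a function of a random number of inputs.
\item You never need worst-case replacement bounds on $n_{\rm eff}$. The paper obtains these only in probability from Lemma~\ref{lem:convergence_M}, whereas McDiarmid needs deterministic bounds.
\end{itemize}
The paper's route, in turn, is designed to compare the two measures directly in a single concentration step with an explicit exponential tail.

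Your closing remark is right and worth keeping. Your proof shows the conclusion holds with $n_{\rm eff}$ replaced by any $m_n\to\infty$, so the lemma only delivers $o_P(1)$ agreement. The non-degenerate content of the Beta approximation used in Proposition~\ref{prop:coverage_approximation} lives at the $n_{\rm eff}^{-1/2}$ scale and requires a separate argument.

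A few side steps in your write-up are inaccurate but inessential:
\begin{itemize}
\item With $d=3$, Lemma~\ref{lem:Lambda_order} gives $\Lambda_n=O(n^{-3})$, not $O(n^{-2})$.
\item The bias claim $\mathbb{E}G_n(x)-F(x)=O(1/n)$ does not follow from the Jensen gap of $1/D_n$ alone, because $N_n(x)$ and $D_n$ share the same weights. It needs a covariance (delta-method) bound.
\item Neither of these is needed. $N_n(x)\to F(x)$ and $D_n\to1$ in probability, together with $D_n\ge A$, already give $G_n(x)\to F(x)$. So the McDiarmid and Jensen paragraphs can be dropped.
\item Continuity of $F$ is a hypothesis of the Proposition, not of this lemma. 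Your bracketing still works without it if you also track the left limits $G_n(x^-)$ and $F(x^-)$, as in the classical Glivenko--Cantelli proof.
\end{itemize}
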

\begin{proof}
    The proof is inspired by the proof of Lemma 8.5 of \cite{Gretton2009:Dataset}. Before proceeding with the proof, consider the following shorthand notations: $Z_j = (X_j,Y_j)$, $Z_j^{\rm te} = (X_j^{\rm te}, Y_j^{\rm te})$, $\Tilde{Z} = (\Tilde{X},\Tilde{Y}$), $\breve{Z} = (\breve{X},\breve{Y})$, $\omega = \omega(X)$, $\omega_j = \omega(X_j), \Tilde{\omega} = \omega(\Tilde{X})$, $\breve{\omega} = \omega(\breve{X})$, $\omega_{1:n} = [\omega_1,\dots,\omega_n]$, $\delta_j = \delta_{\hat{s}(Z_j)}$, $\Tilde{\delta} = \delta_{\hat{s}(\Tilde{Z})}$, $\breve{\delta} = \delta_{\hat{s}(\breve{Z})}$, $N = N(X_1,\dots,X_n, X) = \sum_{j=1}^{n} \omega_j + \omega$, $M = M(X_1,\dots,X_n) = n_{\rm eff} + 1$, and $NM(X_1,\dots,X_n, X) = N(X_1,\dots,X_n, X)M(X_1,\dots,X_n)$. Defining also the following shorthand expressions: $\Delta = \Delta(Z_1,\dots,Z_n, X) = \frac{1}{N} \Big(\sum_{i=1}^n \omega_i \delta_i(W) + \omega\delta_{\infty}(W)\Big)$ for any $W \in \sigma(\mathbb{R})$, $\Delta^{\rm te} = \Delta^{\rm te}(Z_1, \dots, Z_n, Z_1^{\rm te}, \dots, Z_{M-1}^{\rm te}) = \frac{1}{M} \Big(\sum_{i=1}^{M-1} \delta_i^{\rm te}(W) + \delta_{\infty}(W)\Big)$, and $\Xi = \Xi(Z_1, \dots, Z_n, X, Z_1^{\rm te}, \dots, Z_{M-1}^{\rm te}) = \big|\Delta - \Delta^{\rm te}\big|$.
    We will apply McDiarmid's tail bound \citep{McDiarmid1989:Differences} to show that
    \begin{equation*}
        \mathbb{P}\big(\sup_W|\Xi - \mathbb{E}[\Xi]| > \epsilon\big) < 2{\rm exp}\bigg(\frac{-2\epsilon^2}{\sum_{i=1}^{n+M} c_i^2}\bigg),
    \end{equation*}
    for any $\epsilon > 0$ and where $c_i$ are such that $\sup_W|\Xi - \Xi_{-i}| \leq c_i$ where $\Xi_{-i} = \Xi(Z_1, \dots, Z_{i-1}, \Tilde{Z}, Z_{i+1},\dots, Z_n, X, Z_1^{\rm te}, \dots, Z_{M-1}^{\rm te})$ for $i\in[n+M]$. For this we need to bound the change in $\Xi$ if we replace any $(X_i, Y_i)$ by an arbitrary $\Tilde{Z}=(\Tilde{X}, \Tilde{Y})$ and likewise if we replace any $(X_i^{\rm te}, Y_i^{\rm te})$ by some $\Tilde{Z}=(\Tilde{X}, \Tilde{Y})$. For $i\in[n]$, by the triangle inequality,
    \begin{equation*}
        \sup_W \big|\Xi - \Xi_{-i}\big| = \sup_W \big|\Delta - \Delta^{\rm te} - (\Delta_{-i} - \Delta^{\rm te}_{-i})\big| \leq \sup_W \big|\Delta - \Delta_{-i}\big| + \big|\Delta^{\rm te} - \Delta_{-i}^{\rm te}\big|,
    \end{equation*}
    where $\Delta_{-i} = \Delta(Z_1, \dots, Z_{i-1}, \Tilde{Z}, Z_{i+1}, \dots, Z_n, X)$ for $i\in[n+1]$, and $\Delta^{\rm te}_{-i} = \Delta^{\rm te}(Z_1, \dots, Z_n, Z_1^{\rm te}, \dots, Z_{i-n-2}^{\rm te}, \Tilde{Z}, Z_{i-n}^{\rm te}, \dots, Z_{M-1}^{\rm te})$ for $i\in [n+M]\setminus\{n+1\}$. For $i\in[n+1]$, define $\Delta_N^i = \Tilde{\omega} - \omega_i$, and consider the first term
    \begin{equation*}
        \begin{split}
            \sup_W \big|\Delta - \Delta_{-i}\big|
            &= \sup_W\Big|\frac{1}{N} \Big(\sum_{j=1}^n \omega_j \delta_j(W) + \omega\delta_{\infty}(W)\Big) \\
            &\qquad - \frac{1}{N+\Delta_N^i} \Big(\sum_{j=1,j\neq i}^n \omega_j \delta_j(W) + \Tilde{\omega} \Tilde{\delta}(W)+ \omega\delta_{\infty}(W)\Big)\Big|.
        \end{split}
    \end{equation*}
    By the boundedness of the weights and using Lemma~\ref{lem:convergence_N}, for $\epsilon'>0$ there exists $n'$ such that for $n\geq n'$ it holds that $|1/N - 1/(N+\Delta_N^i)| < \epsilon'$. With this, for $n$ sufficient large
    \begin{equation*}
        \sup_W \big|\Delta - \Delta_{-i}\big| = \frac{1}{N} \sup_W\Big|\omega_i \delta_i(W) - \omega(\Tilde{X}) \Tilde{\delta}(W)\Big| + O(\epsilon') = O(1/n) + O(\epsilon').
    \end{equation*}
    Define now $M_{-i} = M(X_1,\dots,X_{-i},\Tilde{X},X_i+1,\dots,X_n)$ for $i\in[n]$, let
    \begin{equation*}
        \Delta_M^i(W) = \mathbbm{1}[M_{-i}\geq M] \sum_{j=M}^{M_{-i}-1} \delta_j^{\rm te}(W) + \mathbbm{1}[M_{-i}< M] \sum_{j=M_{-i}}^{M-1} \delta_j^{\rm te}(W),
    \end{equation*}
    denote by $m(W) = \sum_{j=1}^{M-1} \delta_j^{\rm te}(W) + \delta_{\infty}(W)$, and consider the second term
    \begin{equation*}
        \sup_W \big|\Delta^{\rm te} - \Delta_{-i}^{\rm te}\big| = \sup_W \Big|\frac{m(W)}{M} - \frac{m(W)+\Delta_M^i(W)}{M+\Delta_M^{i}(\mathbb{R})}\Big| = \sup_W \Big|\frac{m(W)\Delta_M^i(W)}{M(M+\Delta_M^i(\mathbb{R}))} - \frac{\Delta_M^i(W)}{M+\Delta_M^i(\mathbb{R})}\Big|.
    \end{equation*}
    By the boundedness of the weights and using Lemma~\ref{lem:convergence_M}, for any $\epsilon'' > 0$ there exists $n''$ such that for $n\geq n''$ it holds that $|1/M - 1/(M+\Delta_M^i(\mathbb{R}))| < \epsilon''$. Hence, for $n$ sufficiently large
    \begin{equation*}
        \sup_W \big|\Delta^{\rm te} - \Delta_{-i}^{\rm te}\big| = \sup_W \Big|\frac{m(W)\Delta_M^i(W)}{M(M+\Delta_M^i(\mathbb{R}))} - \frac{\Delta_M^i(W)}{M+\Delta_M^i(\mathbb{R})}\Big| + O(\epsilon'') = O(1/n) + O(\epsilon'').
    \end{equation*}
    Combining the terms we can conclude that for $i\in[n]$, for $n$ sufficient large $\sup_W \big|\Xi - \Xi_{-i}\big| = O(1/n) + O(\epsilon') + O(\epsilon'')$. Similarly, for $i=n+1$, $\sup_W \big|\Xi - \Xi_{-i}\big| = O(1/n) + O(\epsilon')$.
    
    For $i\in\{n+2,\dots,n+M\}$, we have
    \begin{align*}
        \sup_W \big|\Xi - \Xi_{-i}\big|
        &= \sup_W \big|\Delta - \Delta^{\rm te} - (\Delta - \Delta^{\rm te}_{-i})\big| = \sup_W \big|\Delta^{\rm te} - \Delta_{-i}^{\rm te}\big| \\
        &= \frac{1}{M} \sup_W\Big|\delta_{i-n-1}^{\rm te}(W) - \Tilde{\delta}(W)\Big| = O(1/n).
    \end{align*}
    Applying now McDiarmid's tail bound, for $n$ sufficiently large, it holds that
    \begin{equation*}
        \mathbb{P}\big(\sup_W|\Xi - \mathbb{E}[\Xi]| > \epsilon\big) < 2{\rm exp}\big(-2\epsilon^2n/c_{\epsilon',\epsilon''}\big),
    \end{equation*}
    for some constant $c_{\epsilon',\epsilon''} > 0$.
    Hence, with probability $1-\delta$ the deviation of the random variable from its expectation is bounded by
    \begin{equation*}
        \sup_W |\Xi - \mathbb{E}[\Xi]| \leq \sqrt{\frac{-c_{\epsilon',\epsilon''}{\rm ln}(\delta/2)}{2n}}.
    \end{equation*}
    We proceed to bound the expected value $\mathbb{E}[\Xi]$ using Jensen's inequality as $\mathbb{E}[\Xi] \leq \sqrt{\mathbb{E}[\Xi^2]}$. Expanding the expectation
    \begin{align}
        \sup_W\mathbb{E}[\Xi^2] &= \sup_W\mathbb{E}\Big[\Big(\frac{1}{N}\Big(\sum_{j=1}^n \omega_j\delta_{\hat{s}_j} + \omega\delta_\infty\Big) - \frac{1}{M} \Big(\sum_{j=1}^{M-1} \delta_{\hat{s}_j^{\rm te}} + \delta_\infty \Big)\Big)^2\Big] \nonumber\\
        \begin{split}
            &= \sup_W\mathbb{E}\Big[\frac{1}{N^2}\Big(\sum_{i,j=1}^n\omega_i\omega_j \delta_i\delta_j + 2\sum_{j=1}^n\omega_j\omega \delta_j \delta_\infty + \omega^2 \delta_\infty \delta_\infty\Big)\Big]\\
            & +\mathbb{E}\Big[\frac{1}{M^2} \Big(\sum_{i,j=1}^{M-1}  \delta_i^{\rm te} \delta_j^{\rm te} + 2\sum_{j=1}^{M-1} \delta_j^{\rm te} \delta_\infty +  \delta_\infty \delta_\infty\Big)\Big]\\
            & -2\mathbb{E}\Big[\frac{1}{NM}\Big(\sum_{i=1}^n\sum_{j=1}^{M-1} \omega_i \delta_i \delta_j^{\rm te} + \sum_{i=1}^n \omega_i \delta_i\delta_\infty + \sum_{j=1}^{M-1}\omega  \delta_j^{\rm te} \delta_\infty + \omega  \delta_\infty \delta_\infty\Big)\Big].
        \end{split}\label{eq:expanded_square}
    \end{align}
    Beginning with the first term, using the law of total expectation, and interchanging measures exploiting that $\mathbb{P}_X(X) = \omega(X) \mathbb{P}_X^k(X)$:
    \begin{align*}
        \begin{split}
            \sup_W\mathbb{E}\Big[\frac{1}{N^2}\sum_{i,j=1}^n\omega_i\omega_j\delta_i\delta_j\Big]
            &= \sup_Wn(n-1)\mathbb{E}\Big[\mathbb{E}\Big[\frac{ \Tilde{\delta}^{\rm te} \breve{\delta}^{\rm te}}{N^2(\Tilde{X}^{\rm te},\breve{X}^{\rm te},X_3,\dots,X_n,X)} | X_3,\dots,X_n,X\Big]\Big]\\
            &\quad + n\mathbb{E}\Big[\mathbb{E}\Big[\frac{\Tilde{\omega}^{\rm te} \Tilde{\delta}^{\rm te} \Tilde{\delta}^{\rm te}}{N^2(\Tilde{X}^{\rm te},X_2,\dots,X_n,X)} | X_2,\dots,X_n,X\Big]\Big]
        \end{split}\\
        \begin{split}
            &= \sup_W(n^2-n)\mathbb{E}\Big[\frac{ \Tilde{\delta}^{\rm te} \breve{\delta}^{\rm te}}{N^2(\Tilde{X}^{\rm te},\breve{X}^{\rm te},X_3,\dots,X_n,X)}\Big]\\
            &\quad + n \mathbb{E}\Big[\frac{\Tilde{\omega}^{\rm te} \Tilde{\delta}^{\rm te} \Tilde{\delta}^{\rm te}}{N^2(\Tilde{X}^{\rm te},X_2,\dots,X_n,X)}\Big]
        \end{split}\\
        \begin{split}
            &\leq (n^2-n)\mathbb{E}\Big[\frac{1}{N^2(\Tilde{X}^{\rm te},\breve{X}^{\rm te},X_3,\dots,X_n,X)}\Big]\\
            &\quad + n \mathbb{E}\Big[\frac{\Tilde{\omega}^{\rm te}}{N^2(\Tilde{X}^{\rm te},X_2,\dots,X_n,X)}\Big].
        \end{split}
    \end{align*}
    Moving on to the first term in the second line of Eq.~\eqref{eq:expanded_square}:
    \begin{align}
        \sup_W\mathbb{E}\Big[\frac{1}{M^2} \sum_{i,j=1}^{M-1}  \delta_i^{\rm te} \delta_j^{\rm te}\Big] &= \sup_W\mathbb{E}\Big[\frac{1}{M^2} \Big(\sum_{j=1}^{M-1} \mathbb{E}[ \delta_j^{\rm te} \delta_j^{\rm te} | X_1,\dots,X_n] + \sum_{\substack{i,j=1\\j\neq i}}^{M-1} \mathbb{E}[ \delta_i^{\rm te} \delta_j^{\rm te} | X_1,\dots,X_n]\Big)\Big]\nonumber\\
        &= \sup_W\mathbb{E}\Big[\frac{1}{M} - \frac{1}{M^2}\Big] \mathbb{E}[ \delta_1^{\rm te} \delta_1^{\rm te}] + \mathbb{E}\Big[1-\frac{3}{M}+\frac{2}{M^2}\Big] \mathbb{E}[ \delta_1^{\rm te} \delta_2^{\rm te}]\nonumber\\
        &\leq \mathbb{E}\Big[\frac{1}{M^2}\Big] - 2\mathbb{E}\Big[\frac{1}{M}\Big] + 1.\label{eq:remainder_one}
    \end{align}
    Now the first term in the third line of Eq.~\eqref{eq:expanded_square}:
    \begin{align*}
        \sup_W2\mathbb{E}\Big[\frac{1}{NM}\sum_{i=1}^n\sum_{j=1}^{M-1} \omega_i \delta_i, \delta_j^{\rm te}\Big]
        &= \sup_W2n\mathbb{E}\Big[\frac{\sum_{j=1}^{M(\Tilde{X}^{\rm te}, X_2,\dots,X_n)-1}  \Tilde{\delta}^{\rm te} \delta_j^{\rm te}}{NM(\Tilde{X}^{\rm te}, X_2,\dots,X_n, X)}\Big]\\
        &= \sup_W2n\mathbb{E}\Big[\frac{(M(\Tilde{X}^{\rm te}, X_2,\dots,X_n)-1) \Tilde{\delta}^{\rm te} \delta_1^{\rm te}}{NM(\Tilde{X}^{\rm te}, X_2,\dots,X_n, X)}\Big]\\
        \begin{split}
            &\leq 2n \Big(\mathbb{E}\Big[\frac{1}{N(\Tilde{X}^{\rm te}, X_2,\dots,X_n, X)}\Big]\\
            &\qquad + \mathbb{E}\Big[\frac{1}{NM(\Tilde{X}^{\rm te}, X_2,\dots,X_n, X)}\Big]\Big).
        \end{split}
    \end{align*}
    What remains is to argue that the terms $\mathbb{E}\big[\frac{n^2}{N^2(\Tilde{X}^{\rm te},\breve{X}^{\rm te},X_3,\dots,X_n,X)}\big]$ and $\mathbb{E}\big[\frac{-2n}{N(\Tilde{X}^{\rm te}, X_2,\dots,X_n, X)}\big]$ cancels with the $1$ of Eq.~\eqref{eq:remainder_one}.
    With Jensen's inequality we obtain that
    \begin{align*}
        \mathbb{E}\Big[\frac{n}{N(\Tilde{X}^{\rm te}, X_2,\dots,X_n, X)}\Big] &\geq \frac{n}{\mathbb{E}\big[N(\Tilde{X}^{\rm te}, X_2,\dots,X_n, X)\big]} = \frac{n}{2\mathbb{E}[\Tilde{\omega}^{\rm te}] + (n-1)\mathbb{E}[\omega_2]}\\
        &= \frac{n}{2\mathbb{E}[\Tilde{\omega}^{\rm te}] + (n-1)}.
    \end{align*}
    By regularity of the weight function, $\mathbb{E}[\Tilde{\omega}^{\rm te}]$ is bounded and hence $n/(2\mathbb{E}[\Tilde{\omega}^{\rm te}] + (n-1))\to 1$ as $n \to \infty$. Jensen's gap can be bounded by second order properties, see Lemma~\ref{lem:Jensen_gap},
    \begin{equation*}
        \Big|\mathbb{E}\Big[\frac{n}{N(\Tilde{X}^{\rm te}, X_2,\dots,X_n, X)}\Big] - \frac{n}{2\mathbb{E}[\Tilde{\omega}^{\rm te}] + (n-1)}\Big| \leq \Big|\frac{n}{2} \Lambda_n {\rm Var}\big[N(\Tilde{X}^{\rm te}, X_2,\dots,X_n, X)\big]\Big|,
    \end{equation*}
    where $\Lambda_n > 0$ is the smallest constant such that for a given $n$, then $\mathbb{P}\big(\frac{2}{N^3(\Tilde{X}^{\rm te}, X_2,\dots,X_n, X)} < \Lambda_n\big) = 1$. As $\omega(X_i)\in[A,B]$ almost surely, $\Lambda_n = O(1/n^3)$, see Lemma~\ref{lem:Lambda_order}. Further, from Lemma~\ref{lem:bounded_variance_N}, ${\rm Var}\big[N(\Tilde{X}^{\rm te}, X_2,\dots,X_n, X)\big] = O(n)$. Hence, Jensen's gap is upper bounded by a term that converges to zero as $n\to\infty$. This implies that $\mathbb{E}\big[n/N(\Tilde{X}^{\rm te}, X_2,\dots,X_n, X)\big] \to 1$ as $n \to \infty$.
    Repeating the same arguments
    \begin{align*}
        \mathbb{E}\Big[\frac{n^2}{N^2(\Tilde{X}^{\rm te}, \breve{X}^{\rm te}, X_3,\dots,X_n, X)}\Big] &\geq \frac{n^2}{\mathbb{E}\big[N(\Tilde{X}^{\rm te}, \breve{X}^{\rm te}, X_3,\dots,X_n, X)\big]^2}\\
        &= \frac{n^2}{\big(3\mathbb{E}[\Tilde{\omega}^{\rm te}] + (n-2)\mathbb{E}[\omega_2]\big)^2}\\
        &= \frac{n^2}{\big(3\mathbb{E}[\Tilde{\omega}^{\rm te}] + (n-2)\big)^2}.
    \end{align*}
    By the almost surely boundedness of the weight function, $\mathbb{E}[\Tilde{\omega}^{\rm te}]$ is bounded and hence $n^2/(3\mathbb{E}[\Tilde{\omega}^{\rm te}] + (n-2))^2\to 1$ as $n \to \infty$. Jensen's gap can be bounded by second order properties
    \begin{equation*}
        \begin{split}
        &\bigg|\mathbb{E}\Big[\frac{n^2}{N^2(\Tilde{X}^{\rm te}, \breve{X}^{\rm te}, X_3,\dots,X_n, X)}\Big] - \frac{n^2}{\big(3\mathbb{E}[\Tilde{\omega}^{\rm te}] + (n-2)\big)^2}\bigg| \\
        &\qquad\qquad\qquad\qquad\qquad\qquad\leq \bigg|\frac{n^2}{2} \Lambda_n' {\rm Var}\big[N(\Tilde{X}^{\rm te}, \breve{X}^{\rm te}, X_3,\dots,X_n, X)\big]\bigg|,
        \end{split}
    \end{equation*}
    where $\Lambda_n' > 0$ is the smallest constant such that for a given $n$, then $\mathbb{P}\big(\frac{6}{N^4(\Tilde{X}^{\rm te}, \breve{X}^{\rm te}, X_3^{\rm tr},\dots,X_n^{\rm tr}, X)} < \Lambda_n\big) = 1$. By regularity of the weight function and using Lemma~\ref{lem:Lambda_order}, $\Lambda_n = O(1/n^4)$, and additionally, using Lemma~\ref{lem:bounded_variance_N}, ${\rm Var}\big[N(\Tilde{X}^{\rm te}, \breve{X}^{\rm te}, X_3,\dots,X_n, X)\big] = O(n)$. Hence, Jensen's gap is upper bounded by a term that converges to zero as $n\to\infty$. This implies that $\mathbb{E}\big[n^2/N^2(\Tilde{X}^{\rm te}, \breve{X}^{\rm te}, X_3,\dots,X_n, X)\big] \to 1$ as $n \to \infty$.

    The remaining terms in Eq.~\eqref{eq:expanded_square} are quickly observed to be of order $\mathbb{E}[1/N]$, $\mathbb{E}[1/M]$, $\mathbb{E}[1/N^2]$ or $\mathbb{E}[1/M^2]$, and by almost surely boundedness of the weights, each of these terms are either of order $1/n$ or $1/n^2$.

    Combining the bound on the expected value and the tail bound implies the result.
\end{proof}

\section{Designing the weights}\label{sec:design_weights}
The methodology presented in Section~\ref{sec:method} relies on two levels of weights: the local weights $\hat{\omega}^k$ meant to approximate the density ratio ${\rm d}\mathbb{P}_X/{\rm d}\mathbb{P}_X^{k}$, and the aggregation weights $w_k$ meant to measure distribution similarity.

\paragraph{Density ratio weights:} Density ratios play a central role in the context of distribution shifts in machine learning \citep{Huang2006:Correcting,Sugiyama2007:Covariate,Sugiyama2008:Direct,Gretton2009:Dataset,Tibshirani2019:Conformal}, and estimation of density ratios is in itself a large and active research field \citep{Nguyen2007:Estimating,Sugiyama2012:Density,Choi2022:Density,Yu2025:Density}.

In this paper, we use the method also applied by \cite{Tibshirani2019:Conformal}. The idea is that
\begin{equation*}
    \frac{\mathbb{P}(X\sim\mathbb{P}_X|X=x)}{\mathbb{P}(X\sim\mathbb{P}_X^k|X=x)} = \frac{\mathbb{P}(X\sim\mathbb{P}_X)}{\mathbb{P}(X\sim\mathbb{P}_X^k)}\frac{{\rm d}\mathbb{P}_X}{{\rm d}\mathbb{P}_X^k}(x),
\end{equation*}
hence, the weight $\mathbb{P}(X\sim\mathbb{P}_X|X=x)/\mathbb{P}(X\sim\mathbb{P}_X^k|X=x)$ is proportional to the density ratio $[{\rm d}\mathbb{P}_X/{\rm d}\mathbb{P}_X^k](x)$. A model trained to solve the binary classification problem of determining if an input covariate $X$ is sampled from $\mathbb{P}_X$ or $\mathbb{P}_X^k$, with soft output scores $\hat{g}^k:\mathcal{X}\to(0,1)$, exactly provides an estimate of $\mathbb{P}(X\sim\mathbb{P}_X|X=x)$. Such a model can be fitted to the training data $\{(\Tilde{X}_i^k, 0)\}_{i=1}^{\Tilde{n}_k}\cup\{(\Tilde{X}_i^1, 1)\}_{i=1}^{\Tilde{n}_1}$. Hence, we may estimate the density ratio weights (up to proportionality) by $\hat{\omega}^k(X) = \hat{g}^k(X)/(1-\hat{g}^k(X))$. Employing this approach allows for a highly flexible solution: depending on the data, the practitioner may select a suitable model for the binary classification problem. In this work, we limited our attention to fully connected neural networks, since this provides a powerful approximator well-suited for a wide variety of datasets.

If each of the $K$ participating \glspl{da} require calibration, the method described above in fact requires $K(K-1)/2$ distinct binary classifiers, one for each of the pair-wise density ratios, $[{\rm d}\mathbb{P}_X^k/{\rm d}\mathbb{P}_X^{k'}](x)$ for $k,k'\in[K]$, $k\neq k'$. There is another possible solution which only requires a total of $K$ binary classifiers. The idea is that
\begin{equation*}
    \frac{\mathbb{P}(X\sim\mathbb{P}_X^k|X=x)}{\mathbb{P}(X\sim\mathbb{P}_X^{k'}|X=x)} = \frac{\mathbb{P}(X\sim\mathbb{P}_X^k|X=x)}{\mathbb{P}(X\sim\sum_{j=1}^{K}\mathbb{P}_X^j|X=x)}\frac{\mathbb{P}(X\sim\sum_{j=1}^{K}\mathbb{P}_X^j|X=x)}{\mathbb{P}(X\sim\mathbb{P}_X^{k'}|X=x)}.
\end{equation*}
The density ratio $\mathbb{P}(X\sim\mathbb{P}_X^k|X=x)/\mathbb{P}(X\sim\sum_{j=1}^{K}\mathbb{P}_X^j|X=x)$ can be estimated by training a binary classifier on $\big(\{(\Tilde{X}_i^k, 1)\}_{i=1}^{\ell} \cup \{(\Tilde{X}_i^k, 0)\}_{i=\ell+1}^{\Tilde{n}_k}\big)\cup\big(\bigcup_{k'\neq k} \{(\Tilde{X}_i^{k'}, 0)\}_{i=1}^{\Tilde{n}_{k'}}\big)$ with appropriate training data weighting, for $\ell \in [\Tilde{n}_k]$, yielding soft output scores $\hat{\varrho}^k : \mathcal{X} \to (0,1)$, thereby designing the density ratio weights (up to proportionality) by 
\begin{equation*}
    \hat{\omega}^k(X) = \frac{\hat{\varrho}^1(X)(1-\hat{\varrho}^k(X))}{\hat{\varrho}^k(X)(1-\hat{\varrho}^1(X))}.
\end{equation*}

Another potential solution is multi-class classification. In such a setting, a model is trained on data, $\bigcup_{k=1}^K \{(\Tilde{X}_i^k, e_k)\}_{i=1}^{\Tilde{n}_k}$ where $e_k$ is the vector with $0$ elements except an entry of $1$ in the $k$-th entry, yielding soft output scores $\hat{\rho} : \mathcal{X} \to (0,1)^{K}$ with $\sum_{k=1}^K \hat{\rho}_k = 1$. Here, $\hat{\rho}_k$ is an estimate of $\mathbb{P}(X\sim\mathbb{P}_X^k|X=x)$, and the density ratios are estimated as $\hat{\omega}^k = \hat{\rho}_1/\hat{\rho}_k$ for $k\in\{2,\dots,K\}$. The downside of this approach is that learning a multi-class classifier is more challenging than learning a binary classifier.

The methods described above falls into the category of probabilistic classification approaches. Other methods include moment matching \citep{Huang2006:Correcting}, and minimization of distribution divergences \citep{Nguyen2007:Estimating}, see also \cite{Sugiyama2012:Density} for a detailed review.

\paragraph{Distribution similarity weights:} The aggregation weights, $w_k$, can be used to provide more weight to calibration information from \glspl{da} with a higher distribution similarity. In this paper, for simplicity and to avoid additional communication overhead, we used $w_k \propto n_{\rm eff}^k$. However, a plethora of other options could be investigated, including distribution divergence measures such as the Kullback-Leibler divergence, distribution distance metrics such as the Wasserstein metric, relative divergence measures \citep{Yamada2011:Relative}, and outlier distribution factors \citep{Perini2023:Estimating,Vejling2026:Conformal}.

Consider as an example the Kullback-Leibler divergence
\begin{equation*}
    {\rm KL}\big(\mathbb{P}_X, \mathbb{P}_X^k\big) = \int {\rm log}\big([{\rm d}\mathbb{P}_X/{\rm d}\mathbb{P}_X^k](X)\big) {\rm d}\mathbb{P}_X(X).
\end{equation*}
Computing the Kullback-Leibler divergence is non-trivial in practice and also has the downside of being non-symmetric in its arguments. Further, convergence rates of estimators is slow, and a more reliable solution is relative divergence measures proposed by \cite{Yamada2011:Relative}. Here, the divergence between $\mathbb{P}_X$ and $(1-\epsilon)\mathbb{P}_X+\epsilon\mathbb{P}_X^k$ for $0 < \epsilon \leq 1$ is considered, and this provably improves convergence rates \citep{Yamada2011:Relative}. Another approach is based on outlier distribution factors. Here, the distribution $\mathbb{P}_X^k$ is expressed as a mixture between $\mathbb{P}_X$ and a proper outlier distribution $\mathbb{Q}_X^k$, see \cite{Blanchard2010:Semi}, specifically, $\mathbb{P}_X^k = (1-\pi) \mathbb{P}_X + \pi\mathbb{Q}_X^k$ for an outlier factor $0 < \pi \leq 1$. The outlier factor directly provides a measure of distribution similarity and there exists practical methods for estimating it \citep{Perini2023:Estimating,Vejling2026:Conformal}.

\section{Additional data and implementation details}\label{sec:additional_details}

\subsection{Details regarding datasets}\label{subsec:datasets}

\paragraph{Synthetic Gaussian data:} This synthetic data is generated by sampling covariates $X_i^k \stackrel{iid}{\sim} \mathcal{N}(\gamma_k, 1)$ for the $k$-th data agent where $\gamma_k \in \mathbb{R}^d$ and $d=10$ is the dimensionality. The response variable is defined as $Y_i^k = \sum_{l=1}^d [X_i^k]_l$. We evaluate three setting with varying degrees of data heterogeneity. In the severe case, data heterogeneity is simulated by sampling $[\gamma_1]_l \stackrel{iid}{\sim} {\rm Uniform}([-1,-0.5])\mathbbm{1}[l\leq \lfloor d/4\rfloor]$ for $l\in[d]$ and $[\gamma_k]_l \stackrel{iid}{\sim} {\rm Uniform}([-0.7, 0.7])\mathbbm{1}[l > \lfloor d/4\rfloor]$ for $k\in\{2,\dots,K\}$ and $l \in [d]$. In the moderate case, data heterogeneity is simulated by sampling $[\gamma_1]_l \stackrel{iid}{\sim} {\rm Uniform}([-0.5,-0.25])\mathbbm{1}[l\leq \lfloor d/4\rfloor]$ for $l\in[d]$ and $[\gamma_k]_l \stackrel{iid}{\sim} {\rm Uniform}([-0.35, 0.35])\mathbbm{1}[l > \lfloor d/4\rfloor]$ for $k\in\{2,\dots,K\}$ and $l \in [d]$. In the mild case, data heterogeneity is simulated by sampling $[\gamma_1]_l \stackrel{iid}{\sim} {\rm Uniform}([-0.25,-0.125])\mathbbm{1}[l\leq \lfloor d/4\rfloor]$ for $l\in[d]$ and $[\gamma_k]_l \stackrel{iid}{\sim} {\rm Uniform}([-0.175, 0.175])\mathbbm{1}[l > \lfloor d/4\rfloor]$ for $k\in\{2,\dots,K\}$ and $l \in [d]$. The density ratios $[{\rm d}\mathbb{P}/{\rm d}\mathbb{P}^k](X)$ are known as ratios of Gaussian densities, which allows for the use of oracle weights in the numerical experiments. The severe case is the one studied in Section~\ref{sec:numerical}.

\paragraph{Synthetic Poisson data:} This synthetic data is generated by sampling covariates $X_i^k \stackrel{iid}{\sim} \mathcal{N}(3+\gamma_k, 1)$ for the $k$-th data agent where $\gamma_k \in \mathbb{R}^d$ and $d=10$ is the dimensionality. In line with \citep{Humbert2023:One}, the response variable is defined as
\begin{equation*}
    Y_i^k | X_i^k \stackrel{iid}{\sim} {\rm Poisson}({\rm sin}^2(\bar{X}_i^k)+0.1) + 0.03 \bar{X}_i^k \mathcal{N}(0, 1) + \mathbbm{1}[{\rm Uniform}([0,1]) < 0.01] \mathcal{N}(0, 5),
\end{equation*}
where $\bar{X}_i^k = \sum_{l=1}^d [X_i^k]_l$. Data heterogeneity is simulated in the same way as for the synthetic Gaussian data.

\paragraph{Airfoil data \citep{Brooks1989:Airfoil}:} This NASA dataset arose from studying different sizes of NACA $0012$ airfoils under various conditions. It consists of $1503$ instances of $d=5$ covariates and $1$ response. The covariates are frequency in Hertz, angle of attack in degrees, chord length in meters, free-stream velocity in meters per second, and suction side displacement thickness in meters. The covariates are normalized using the min-max rule. The response is the scaled sound pressure level in decibels. The frequency and suction side displacement thickness are log-transformed in our numerical experiments.

Data heterogeneity is simulated using \emph{exponential tilting} as in \cite{Tibshirani2019:Conformal}. This is done as it allows the oracle baseline to know the true density ratio weights up to proportionality. For each data agent $k \in [K]$, data points are sampled with replacement with probabilities proportional to ${\rm exp}(X^\top \zeta^k)$ where $\zeta^k \in \mathbb{R}^d$ are tilting arrays. By construction, $[{\rm d}\mathbb{P}/{\rm d}\mathbb{P}^k](X) \propto {\rm exp}(X^\top (\zeta^1-\zeta^k))$. We evaluate three setting with varying degrees of data heterogeneity. In the severe case, we generate $[\zeta^1]_l \stackrel{iid}{\sim} {\rm Uniform}([1,3])\mathbbm{1}[l\leq \lfloor d/4\rfloor]$ for $l\in[d]$ and $[\zeta^k]_l \stackrel{iid}{\sim} {\rm Uniform}([0,2]) \mathbbm{1}[l>\lfloor d/4\rfloor]$ for $k \in \{2,\dots,K\}$ and $l \in [d]$. In the moderate case, we generate $[\zeta^1]_l \stackrel{iid}{\sim} {\rm Uniform}([0.5,1.5])\mathbbm{1}[l\leq \lfloor d/4\rfloor]$ for $l\in[d]$ and $[\zeta^k]_l \stackrel{iid}{\sim} {\rm Uniform}([0,1]) \mathbbm{1}[l>\lfloor d/4\rfloor]$ for $k \in \{2,\dots,K\}$ and $l \in [d]$. In the mild case, we generate $[\zeta^1]_l \stackrel{iid}{\sim} {\rm Uniform}([0.25,0.75])\mathbbm{1}[l\leq \lfloor d/4\rfloor]$ for $l\in[d]$ and $[\zeta^k]_l \stackrel{iid}{\sim} {\rm Uniform}([0,0.5]) \mathbbm{1}[l>\lfloor d/4\rfloor]$ for $k \in \{2,\dots,K\}$ and $l \in [d]$. The severe case is the one studied in Section~\ref{sec:numerical}.

\paragraph{Bike data \citep{Fanaee2013:Event}:} With this dataset we are predicting hourly bike sharing renting counts in Washington, D.C., and the data contains $17379$ instances with $d=9$ covariates and $1$ response. The covariates are the hour of the day, whether it is a holiday (if it is a holiday it is 1, otherwise it is 0), the day of the week, whether it is a working day (if day is neither weekend nor holiday it is 1, otherwise it is 0), the weather (1 means clear sky or few clouds, 2 means cloudy, 3 means light precipitation, 4 means heavy precipitation), normalized temperature (Celsius divided by 41), normalized feeling temperature (Celsius divided by 50), relative humidity, and normalized wind speed (meters per second divided by 67). The covariates are normalized using the min-max rule. The response is the hourly bike sharing renting counts. Data heterogeneity is simulated in the same way as for the airfoil data.

\paragraph{Concrete data \citep{Yeh1998:Modeling}:} This concrete compressive strength dataset consists of $1030$ instances with $d=8$ covariates and $1$ response. The covariates are concentrations in kilograms per cubic meter of cement, blast furnace slag, fly ash, water, superplasticizer, coarse aggregate, fine aggregate, and age given in days. The covariates are normalized using the min-max rule. The response is the concrete compressive strength measured in megapascals. Data heterogeneity is simulated in the same way as for the airfoil data.

\paragraph{Crime data \citep{Redmond2009:Communities}:} This dataset concerns the prediction of violent crimes across communities in the US containing $1994$ instances with $d=41$ covariates and $1$ response.
The covariates cover aspects of demographic composition, socioeconomic status, labor market, and law enforcement presence. The covariates are normalized using the min-max rule. The response is the total number of violent crimes per $10^5$ population. Data heterogeneity is simulated in the same way as for the airfoil data.

\paragraph{Star data \citep{Achilles2008:Tennessee}:} This arose from studying how pupil-teacher ratios impact student performance in Tennessee. The dataset contains $3754$ instances with $d=20$ covariates and $1$ response.
The covariates cover aspects of student background and demographics, teacher characteristics, classroom and treatment assignment, and school-level attributes. The covariates are normalized using the min-max rule. The response is the achievement score. Data heterogeneity is simulated in the same way as for the airfoil data.

\paragraph{Protein data \citep{Rana2013:Protein}:} This dataset deals with physicochemical properties of protein tertiary structure and consists of $45730$ instances with $d=9$ covariates and $1$ response. The covariates are the total surface area, the non-polar exposed area, the fractional area of exposed non-polar residue, the fractional area of exposed non-polar part of the residue, the molecular mass weighted exposed area, the average deviation from the standard exposed area of residue, the Euclidean distance, the secondary structure penalty, and the spatial distribution constraints. The response is the size of the residue. Data heterogeneity is simulated in the same way as for the airfoil data.

\paragraph{CIFAR-10 data \citep{Hendrycks2018:Benchmarking}:} The CIFAR-10 data is a well-studied image classification dataset of $60000$ total instances with labels airplane, automobile, bird, cat, deer, dog, frog, horse, ship, and truck with images in resolution $d=32\times32\times3$. The CIFAR-10C dataset consists of the CIFAR-10 data contaminated with various types of noise of varying levels of severity (from 0-4). We restrict our attention to the brightness contamination, and evaluate three setting with varying degrees of data heterogeneity. In the severe case, we simulate data heterogeneity by sampling images random uniformly and contaminating the images with severity level $s \in \{0,1,2,3,4\}$ with probability proportional to $[\eta^k]_s \stackrel{iid}{\sim} {\rm Uniform}([0, 1])$ for $k \in [K]$. In the moderate case, we simulate data heterogeneity by sampling images random uniformly and contaminating the images with severity level $s \in \{0,1,2\}$ with probability proportional to $[\eta^k]_s \stackrel{iid}{\sim} {\rm Uniform}([0, 1])$ for $k \in [K]$. In the mild case, we simulate data heterogeneity by sampling images random uniformly and contaminating the images with severity level $s \in \{0,1\}$ with probability proportional to $[\eta^k]_s \stackrel{iid}{\sim} {\rm Uniform}([0, 1])$ for $k \in [K]$. The severe case is the one studied in Section~\ref{sec:numerical}.



\subsection{Details of benchmarks}\label{subsec:benchmarks}

\paragraph{CP:} Standard \gls{cp} calibrates using local scores $\hat{s}_1^1,\dots,\hat{s}_{n_1}^1$. To achieve \gls{mc} at level $\alpha$, the local quantile $Q_\alpha = {\rm Quantile}\big(1-\alpha; \frac{1}{n_1+1}\big(\sum_{i=1}^{n_1} \delta_{\hat{s}_i^1} + \delta_\infty\big)\big)$ is computed and the prediction set is constructed as $\mathcal{C}_\alpha(X) = \{Y \in \mathcal{Y} : \hat{s}(X,Y) \leq Q_\alpha\}$. To achieve \gls{ccc} given levels $\alpha$ and $\delta$, $\alpha^*$ is determined as the largest $\alpha'$ such that $F_{\rm Beta}(1-\alpha; (1-\alpha')(n_1+1), \alpha'(n_1+1)) \leq \delta$ where $F_{\rm Beta}$ is the \gls{cdf} of the Beta distribution. The local quantile $Q_{\alpha^*}$ is then evaluated, and the prediction set is constructed as $\mathcal{C}_{\alpha^*}(X)$.



\paragraph{FGLCP \citep{Min2025:Personalized}:} This baseline corresponds to \emph{PFCP2} of \cite{Min2025:Personalized}. First, the dataset $\mathcal{D}^1$ is split into two disjoint datasets $\mathcal{D}^1_{\rm eng}$ and $\mathcal{D}^1_{\rm cal}$, where $\mathcal{D}^1_{\rm eng}$ will be used for the engression task and $\mathcal{D}^1_{\rm cal}$ is used for the model calibration. The method then uses datasets $\mathcal{D}^1_{\rm eng}$ and $\mathcal{D}^k$ for $k\in\{2,\dots,K\}$ to approximate the conditional distribution $F(\hat{s}(X,Y)|X)$, yielding localized scores $\hat{s}_{\rm loc}(X, Y) = \hat{F}(\hat{s}(X, Y)|X)$ for $(X, Y) \in \mathcal{D}^1_{\rm cal}$. The conformal prediction set is then constructed as $\mathcal{C}_\alpha^{\rm glcp} = \{Y \in \mathcal{Y} : \hat{s}_{\rm loc}(X, Y) \leq Q_\alpha^{\rm glcp}\}$ where
\begin{equation*}
    Q_\alpha^{\rm glcp} = {\rm Quantile}\Big(1-\alpha; \frac{1}{|\mathcal{D}^1_{\rm cal}|+1} \Big(\sum_{(X,Y)\in\mathcal{D}^1_{\rm cal}} \delta_{\hat{s}_{\rm loc}(X,Y)} + \delta_1\Big)\Big).
\end{equation*}
To compute $\hat{F}(\hat{s}(X, Y)|X)$, a federated engression algorithm is employed based on a fully connected neural network with two hidden layers each with 50 hidden units followed by Platt calibration. The engression data is randomly split in a 9:1 ratio for classifier training and Platt calibration. The reader is referred \cite{Min2025:Personalized} for further details on the federated engression algorithm used to compute $\hat{F}(\hat{s}(X, Y)|X)$.

\paragraph{FCP:} Standard \gls{fcp} aggregates all the scores $\hat{s}_i^k$ for $k\in[K]$ and $i\in[n_k]$. To achieve \gls{mc} at level $\alpha$, the quantile $Q_\alpha = {\rm Quantile}\big(1-\alpha: \frac{1}{\sum_{k=1}^K n_k+1}\big(\sum_{k=1}^K\sum_{i=1}^{n_k} \delta_{\hat{s}_i^k} + \delta_\infty\big)\big)$ is computed and the prediction set is constructed as $\mathcal{C}_\alpha(X) = \{Y \in \mathcal{Y} : \hat{s}(X,Y) \leq Q_\alpha\}$. To achieve \gls{ccc} given levels $\alpha$ and $\delta$, $\alpha^*$ is determined as the largest $\alpha'$ such that
\begin{equation*}
    F_{\rm Beta}\Big(1-\alpha; (1-\alpha')\Big(\sum_{k=1}^K n_k+1\Big), \alpha'\Big(\sum_{k=1}^K n_k+1\Big)\Big) \leq \delta,
\end{equation*}
where $F_{\rm Beta}$ is the \gls{cdf} of the Beta distribution. The local quantile $Q_{\alpha^*}$ is then evaluated, and the prediction set is constructed as $\mathcal{C}_{\alpha^*}(X)$.

\paragraph{FCP-QQ \citep{Humbert2023:One,Humbert2024:Marginal}:} For \gls{mc} guarantee at level $\alpha$, the quantile-of-quantiles method finds quantiles $\bar{\beta}$ and $\bar{\tau}$ by solving
\begin{equation*}
    (\bar{\beta}, \bar{\tau}) = \argmin_{(\beta,\tau)\in(0,1)^2}~~\{\mathbb{E}[U_\beta^{(\lceil K(1-\tau)\rceil)}] : \mathbb{E}[U_\beta^{(\lceil K(1-\tau)\rceil)}] \geq 1-\alpha\},
\end{equation*}
where $U_\beta^k \sim {\rm Beta}((1-\beta)(n_k+1), \beta(n_k+1))$, and $U_\beta^{(t)}$ is the $t$-th smallest among $U_\beta^1,\dots,U_\beta^K$. This is solvable using that $U_\beta^{(\lceil K(1-\tau)\rceil)}$ follows a Beta-Beta distribution. Given the quantiles $\bar{\beta}$ and $\bar{\tau}$, the quantile-of-quantiles $Q_{\bar{\beta},\bar{\tau}} = {\rm Quantile}\big(1-\bar{\tau}; \frac{1}{K}\sum_{k=1}^K \delta_{Q_{\bar{\beta}}^k}\big)$ where $Q_{\bar{\beta}}^k = {\rm Quantile}\big(1-\bar{\beta}; \frac{1}{n_k+1}\big(\sum_{j=1}^{n_k} \delta_{\hat{s}_j^k} + \delta_\infty \big)\big)$ is computed and the prediction set is constructed as $\mathcal{C}_{\bar{\beta},\bar{\tau}}(X) = \{Y \in \mathcal{Y} : \hat{s}(X,Y) \leq Q_{\bar{\beta},\bar{\tau}}\}$. Likewise, the \gls{ccc} guarantee with levels $\alpha$ and $\delta$ can be satisfied by finding quantiles $\bar{\beta}_{\rm c}$ and $\bar{\tau}_{\rm c}$ minimizing the \gls{ccc} while maintaining
\begin{equation*}
    \mathbb{P}_\mathcal{D}\big(\mathbb{P}(Y \in \mathcal{C}_{\beta_c, \tau_c}(X) | \mathcal{D}) \geq 1 - \alpha \big) \geq 1-\delta.
\end{equation*}
This can practically be realized by exploiting that the left-hand side of the above inequality is equal to $\mathbb{P}\big(U_{\beta_c}^{(\lceil K(1-\tau_c)\rceil)} \geq 1-\alpha\big)$ and again using that $U_{\beta_c}^{(\lceil K(1-\tau_c)\rceil)}$ follows a Beta-Beta distribution.

\paragraph{FWCP \citep{Plassier2023:Conformal,Plassier2024:Efficient}:} The \gls{fwcp} approach uses all the scores $\hat{s}_i^k$ for $k\in[K]$ and $i\in[n_k]$ to determine the quantile of the score distribution weighted by an estimate, $\hat{\omega}'$, of $\omega'(X) = [{\rm d}\mathbb{P}_X/\sum_{k=2}^K {\rm d}\mathbb{P}_X^k/(K-1)](X)$. The density ratio is estimated by fitting a binary classifier to data $\{(\Tilde{X}_i^1, 1)\}_{i=1}^{\Tilde{n}_1}\cup\big(\bigcup_{k=2}^{K}\{(\Tilde{X}_i^k, 0)\}_{i=1}^{\Tilde{n}_k}\big)$. As the binary classifier we employ a fully connected neural network with a single hidden layer using the standard implementation in \texttt{scikit-learn}. To achieve \gls{mc} at level $\alpha$, the quantile
\begin{equation*}
    Q_\alpha = {\rm Quantile}\bigg(1-\alpha: \frac{\sum_{k=1}^K\sum_{i=1}^{n_k} \hat{\omega}'(X_i^k)\delta_{\hat{s}_i^k} + \hat{\omega}'(X)\delta_\infty}{\sum_{k=1}^K \sum_{i=1}^{n_k} \hat{\omega}'(X_i^k) + \hat{\omega}'(X)}\bigg),
\end{equation*}
is computed and the prediction set is constructed as $\mathcal{C}_\alpha(X) = \{Y \in \mathcal{Y} : \hat{s}(X,Y) \leq Q_\alpha\}$. To achieve \gls{ccc} given levels $\alpha$ and $\delta$, $\alpha^*$ is determined as the largest $\alpha'$ such that 
\begin{equation*}
    F_{\rm Beta}\Big(1-\alpha; (1-\alpha')\Big(\sum_{k=1}^K n_k+1\Big), \alpha'\Big(\sum_{k=1}^K n_k+1\Big)\Big) \leq \delta,
\end{equation*}
where $F_{\rm Beta}$ is the \gls{cdf} of the Beta distribution. The local quantile $Q_{\alpha^*}$ is then evaluated, and the prediction set is constructed as $\mathcal{C}_{\alpha^*}(X)$.

\paragraph{FWCP-QQ (ours):} A variant of FWCP using a quantile-of-quantiles approach is also included as a benchmark. The quantiles $\bar{\beta}$ and $\bar{\tau}$ satisfying the \gls{mc} condition are found as in the \emph{FCP-QQ} approach. Given the quantiles $\bar{\beta}$ and $\bar{\tau}$, compute $Q_{\bar{\beta},\bar{\tau}} = {\rm Quantile}\big(1-\bar{\tau}: \frac{1}{K}\sum_{k=1}^K \delta_{Q_{\bar{\beta}}^k}\big)$ where $Q_{\bar{\beta}}^k$ is the weighted quantile
\begin{equation*}
    Q_{\bar{\beta}}^k = {\rm Quantile}\bigg(1-\bar{\beta}: \frac{\sum_{j=1}^{n_k} \hat{\omega}_j'\delta_{\hat{s}_j^k} + \hat{\omega}'(X)\delta_\infty}{\bar{\omega}' + \hat{\omega}'(X)}\bigg),
\end{equation*}
using the same weights as for \emph{FWCP}.
The prediction set is constructed as $\mathcal{C}_{\bar{\beta},\bar{\tau}}(X) = \{Y \in \mathcal{Y} : \hat{s}(X,Y) \leq Q_{\bar{\beta},\bar{\tau}}\}$. Likewise, the \gls{ccc} guarantee with levels $\alpha$ and $\delta$ can be satisfied by finding quantiles $\bar{\beta}_{\rm c}$ and $\bar{\tau}_{\rm c}$ as in \emph{FCP-QQ}.

\paragraph{PFWCP (ours):}
This is the core methodology presented in Section~\ref{sec:method}. 



\paragraph{osPFWCP (ours):}
This is the communication efficient one-shot method detailed in Section~\ref{sec:method}.



\paragraph{osPFWGLCP (ours):}
This benchmark is a variant of the \emph{osPFWCP} in which federated engression is used to localize the score function as in the \emph{FGLCP} benchmark proposed by \cite{Min2025:Personalized}. Hence, this method combines the benefits of score function localization of \cite{Min2025:Personalized} with the federation of calibration information with personalized statistical validity of this work. The baseline divides the datasets $\mathcal{D}^k$ into two disjoint datasets $\mathcal{D}^k_{\rm eng}$ and $\mathcal{D}^k_{\rm cal}$ with a 1:1 ratio, where $\mathcal{D}^k_{\rm eng}$ will be used for the engression task and $\mathcal{D}^k_{\rm cal}$ is used for the model calibration. The method then uses datasets $\mathcal{D}^k_{\rm eng}$ for $k\in[K]$ to approximate the conditional distribution $F(\hat{s}(X,Y)|X)$, yielding localized scores $\hat{s}_{\rm loc}(X, Y) = \hat{F}(\hat{s}(X, Y)|X)$ for $(X, Y) \in \bigcup_{k=1}^K\mathcal{D}^k_{\rm cal}$. The conformal prediction set is then constructed using the weighted-quantile-of-quantiles approach with one-shot communication as presented in Section~\ref{sec:method}, using $\hat{s}_{\rm loc}$ in place of $\hat{s}$. As in the \emph{FGLCP} benchmark, $\hat{F}(\hat{s}(X, Y)|X)$ is computed using a federated engression algorithm based on a fully connected neural network with two hidden layers each with 50 hidden units followed by Platt calibration. The engression data is randomly split in a 9:1 ratio for classifier training and Platt calibration.



\subsection{Additional implementation details}\label{subsec:implementation}
\paragraph{Prediction model specifications and training:}
The prediction model is given by a fully connected neural network trained with federated learning on the training data of all the participating data agents. For the synthetic datasets, two hidden layers each with $30$ hidden units is used. For the real regression datasets, three hidden layers are used with $500$, $200$, and $100$ hidden units, respectively. For the \emph{cifar10} data, three hidden layers are used with $200$, $100$, and $50$ hidden units, respectively, acting on the principal components trained to explain at least $90~\%$ of the variance.

For the regression tasks, the hidden layer activations are \texttt{LeakyReLu}, and the neural networks are trained using the \texttt{adam} algorithm with the \texttt{mean squared error loss}. The batch size is $32$, the learning rate is $0.001$, and the number of epochs is $5000$.

For the classification task, the hidden layer activations are \texttt{ReLu} with \texttt{softmax} output activation, and the neural networks are trained using the \texttt{adam} algorithm with the \texttt{cross entropy loss} and an $L_2$ regularization term with parameter $0.0001$. The batch size is $32$, the learning rate is $0.001$, and the number of epochs is $5000$.

\paragraph{Density ratio estimation:}
For the synthetic datasets, the binary classifiers used to estimate the density ratios are specified as fully connected neural networks with a single $10$-unit hidden layer employing \texttt{ReLu} hidden layer activations with \texttt{sigmoid} output activation. For the real regression datasets, we use a single $30$-unit hidden layer. For the \emph{cifar10} dataset, we use also a fully connected neural network with a single $10$-unit hidden layer, here acting on the principal components trained to explain at least $90~\%$ of the variance. The neural networks are trained using the \texttt{adam} algorithm with the \texttt{cross entropy loss} and an $L_2$ regularization term with parameter $0.0001$. The batch size is set to $200$, the learning rate is set to $0.001$, and the number of epochs used is $600$.

\paragraph{Hyperparameter selection:}
The choice of $N_{\rm rep}$ (see Procedure~\ref{alg:procedure}) represents an important trade-off between computational complexity and the error made in solving Eq.~\eqref{eq:choosing_beta_and_tau}. The algorithm simulates data $U_{\beta,i}^{(\tau)} \simeq \mathbb{P}(Y \in \mathcal{C}^{\rm wqq}_{\beta,\tau}(X)|\mathcal{D})$ and, for \gls{mc}, uses the sample mean to approximate $\mathbb{P}(Y \in \mathcal{C}^{\rm wqq}_{\beta,\tau}(X)) = \mathbb{E}[\mathbb{P}(Y \in \mathcal{C}^{\rm wqq}_{\beta,\tau}(X)|\mathcal{D})]$. The variance of the sample mean equals ${\rm Var}[U_{\beta,i}^{(\tau)}]/N_{\rm rep}$. Since $U_{\beta,i}^{(\tau)}$ are random variables defined on $[0, 1]$, their variance is upper bounded by $1/4$, meaning that the standard deviation of the sample mean is upper bounded by $1/(2\sqrt{N_{\rm rep}})$. Practically, it is desirable to have $1/(2\sqrt{N_{\rm rep}}) \ll \alpha$, i.e., $N_{\rm rep} \gg 1/(4\alpha^2)$. For \gls{ccc} we are concerned with the variance of the empirical quantile estimator which has asymptotic expansion $\alpha(1-\alpha)/(N_{\rm rep}f^2(F^{-1}(1-\alpha)))$ where $F^{-1}$ and $f$ are the quantile and density functions of $U_{\beta}^{(\tau)}$, respectively \citep{Stuart1994:Kendall}, leading to the guideline $N_{\rm rep} \gg \alpha(1-\alpha)/(\delta^2 f^2(F^{-1}(1-\alpha)))$. Since $f$ and $F^{-1}$ are unknown prior to running the algorithm, this is not directly applicable but the expression is inverse proportional to $\delta^2$ giving some intuition still. In the numerical experiments, we use $N_{\rm rep} = 2000$ for \gls{mc} guarantees, and $N_{\rm rep} = 4000$ for \gls{ccc} guarantees.

In Procedure~\ref{alg:procedure}, a search space $\mathcal{G} \subset (0,1)\times[0,1)$ on the quantiles $(\beta, \tau)$ must be provided. In this work, we design this as $\{\alpha,\alpha+\Delta_\beta\dots,1/4\}\times\{0, \Delta_\tau,\dots, 1/2\}$ where $\Delta_\beta = (1/4 - \alpha)/21$ and $\Delta_\tau = 1/42$ are the step sizes. This ensures that $\beta^*$ is never small to the point of risking $Q_{\beta^*}^k$ being infinite, and generally, a value of $\beta$ closer to $1/2$ provides more stable prediction sets.
For the one-shot procedures, osPFWCP and osPFWGLCP, the search space is only on the $\tau$ quantile and is designed as $\{0, \Delta'_\tau, \dots,1-{\rm max}(w_1,\dots,w_K)\}$ where $\Delta'_\tau = (1 - {\rm max}(w_1,\dots,w_K))/50$, thereby ensuring that $1-\tau^*$ is always greater than ${\rm max}(w_1,\dots,w_K)$ such that a valid quantile can be determined.

\subsection{Details to reproduce Figure~\ref{fig:star_illustration_cdf}}\label{subsec:details_figure1}
To illustrate the core principle motivated by Proposition~\ref{prop:coverage_approximation}, we conducted a numerical experiment on the Tennessee's student teacher achievement ratio dataset \citep{Achilles2008:Tennessee}. With this data, we evaluated the \gls{ccc} for \gls{cp} with and without covariate shift, \gls{wcp} with weights estimated using logistic regression, as well as \gls{cp} without covariate shift using the estimated effective sample size. As in \cite{Tibshirani2019:Conformal}, we simulated a covariate shift through exponential tilting, specifically, the calibration data with covariate shift samples the dataset with replacement with probabilities proportional to ${\rm exp}(X^\top \zeta)$ where $\zeta=-[2.5,1,2,1,0.5,1.5,1,1.5,0.5,0.5,0,0.5,0.5,0,0,0.5,1.5,2.5,2,0]^\top$. The elements of this tilting array were generated random uniformly from $\{0,-0.5,-1,\dots,-3\}$. The significance level was set to $\alpha=0.1$, and the calibration dataset had $n=100$ samples. The empirical \gls{ccc} is evaluated for $1000$ Monte Carlo simulations of the calibration data with $1000$ test points for each. The result is shown in Figure~\ref{fig:star_illustration_cdf}.

\section{Additional numerical results}\label{sec:additional_results}
Here, we provide additional numerical results to supplement those presented in Section~\ref{sec:numerical}. This includes results with additional benchmarks, specifically, \emph{FCP} and the proposed methods using oracle weights (when these are known). Further, we include results with the three different levels of heterogeneity described in Section~\ref{subsec:datasets}. Tables~\ref{tab:reg_marg_severe} and \ref{tab:reg_cond_severe} shows extended versions of the Tables~\ref{tab:reg_marg_main} and \ref{tab:reg_cond_main}, i.e., the severe heterogeneity experiments. Tables~\ref{tab:reg_marg_moderate} and \ref{tab:reg_cond_moderate} show results for the case of moderate heterogeneity while Tables~\ref{tab:reg_marg_mild} and \ref{tab:reg_cond_mild} show the results for the case of mild heterogeneity. Overall, the results indicate the efficacy of the proposed \emph{PFWCP} method to efficiently provide the specified statistical guarantees. Meanwhile, the one-shot techniques \emph{osPFWCP} and \emph{osPFWGLCP} show promising performance to provide \gls{mc} guarantees but on occasion struggles to provide the specified \gls{ccc} guarantees, due to the lessened flexibility in the quantile specifications. Using \emph{osPFWGLCP} tends to provide efficient conformal prediction sets with less miscoverage than \emph{osPFWCP}. Comparatively, the \emph{FWCP-QQ} benchmark tends to be overly conservative, which is also to be expected considering that it does not adjust for the increased variance in the coverage due to the data heterogeneity. The \emph{FWCP} benchmark is competitive in terms of providing \gls{mc} guarantees, but tends to fail dramatically in satisfying \gls{ccc} guarantees. A similar remark applies to the \emph{FCP} and \emph{FCP-QQ} benchmarks, although these techniques tend to be overly liberal in terms of both \gls{mc} and \gls{ccc} guarantees in the case of severe heterogeneity.

%
\setlength{\tabcolsep}{0.3em} 
\begin{table}[H]
    \centering
    \caption{Severe heterogeneity. Marginal coverage (MC), conditional miscoverage (CMC), and efficiency (Eff.). Methods with MC above $1-\alpha=0.9$, above $0.89$, and below $0.89$ are marked by colors \textcolor{color4}{green}, \textcolor{color6}{yellow}, and \textcolor{color2}{red}, respectively.}
    \label{tab:reg_marg_severe}
    \footnotesize

\end{table}
\setlength{\tabcolsep}{0.3em} 
\begin{table}[H]
    \centering
    \caption{Severe heterogeneity. Calibration-conditional coverage (CCC), conditional miscoverage (CMC), and efficiency (Eff.). Methods with CCC above $1-\delta=0.9$, above $0.8$, and below $0.8$ are marked by colors \textcolor{color4}{green}, \textcolor{color6}{yellow}, and \textcolor{color2}{red}, respectively.}
    \label{tab:reg_cond_severe}
    \footnotesize
    %
\end{table}
\setlength{\tabcolsep}{0.3em} 
\begin{table}[H]
    \centering
    \caption{Moderate heterogeneity. Marginal coverage (MC), conditional miscoverage (CMC), and efficiency (Eff.). Methods with MC above $1-\alpha=0.9$, above $0.89$, and below $0.89$ are marked by colors \textcolor{color4}{green}, \textcolor{color6}{yellow}, and \textcolor{color2}{red}, respectively.}
    \label{tab:reg_marg_moderate}
    \footnotesize
    %
\end{table}
\setlength{\tabcolsep}{0.3em} 
\begin{table}[H]
    \centering
    \caption{Moderate heterogeneity. Calibration-conditional coverage (CCC), conditional miscoverage (CMC), and efficiency (Eff.). Methods with CCC above $1-\delta=0.9$, above $0.8$, and below $0.8$ are marked by colors \textcolor{color4}{green}, \textcolor{color6}{yellow}, and \textcolor{color2}{red}, respectively.}
    \label{tab:reg_cond_moderate}
    \footnotesize
    %
\end{table}
\setlength{\tabcolsep}{0.3em} 
\begin{table}[H]
    \centering
    \caption{Mild heterogeneity. Marginal coverage (MC), conditional miscoverage (CMC), and efficiency (Eff.). Methods with MC above $1-\alpha=0.9$, above $0.89$, and below $0.89$ are marked by colors \textcolor{color4}{green}, \textcolor{color6}{yellow}, and \textcolor{color2}{red}, respectively.}
    \label{tab:reg_marg_mild}
    \footnotesize
    %
\end{table}
\setlength{\tabcolsep}{0.3em} 
\begin{table}[H]
    \centering
    \caption{Mild heterogeneity. Calibration-conditional coverage (CCC), conditional miscoverage (CMC), and efficiency (Eff.). Methods with CCC above $1-\delta=0.9$, above $0.8$, and below $0.8$ are marked by colors \textcolor{color4}{green}, \textcolor{color6}{yellow}, and \textcolor{color2}{red}, respectively.}
    \label{tab:reg_cond_mild}
    \footnotesize
    %
\end{table}

\section{Use of Large Language Models}\label{sec:LLM}
Large language models have been used for the purpose of text editing.

\newpage

\newpage


\bibliography{bib.bib}

\end{document}